\newcommand{\Regret}{\kR}
\newcommand{\EE}[1]{\bE\left[#1\right]}
\newcommand{\Prob}[1]{\bP\left\{#1\right\}}
\newcommand{\R}{\bR}
\newcommand{\N}{\bN}
\newcommand{\one}[1]{\mathds{1}_{\left\{#1\right\}}}
\renewcommand{\phi}{\varphi}
\renewcommand{\epsilon}{\varepsilon}
\newcommand{\al}[1]{ \begin{align} #1  \end{align}}
\newcommand{\eq}[1]{ \begin{equation} #1  \end{equation}}
\newcommand{\als}[1]{ \begin{align*} #1  \end{align*}}
\newcommand{\eqs}[1]{ \begin{equation*} #1  \end{equation*}}
\newcommand{\el}{\end{flushleft}}
\newcommand{\bl}{\begin{flushleft}}
\newcommand{\floor}[1]{ \left\lfloor #1 \right\rfloor }
\newcommand{\ceil}[1]{ \left\lceil #1 \right\rceil }
\newcommand{\argmax}{\arg\!\max}
\newcommand{\bE}{\mathbb{E}}
\newcommand{\bN}{\mathbb{N}}
\newcommand{\bP}{\mathbb{P}}
\newcommand{\bR}{\mathbb{R}}
\newcommand{\cN}{\mathcal{N}}
\newcommand{\cP}{\mathcal{P}}
\newcommand{\kR}{\mathfrak{R}}
\theoremstyle{plain}
\newtheorem{lem}{Lemma}
\newtheorem{rem}{Remark}
\newtheorem{fact}{Fact}
\newif\ifsup
\title{Stochastic Multi-Armed Bandits with \\ Control Variates}
\author{
	Arun Verma \\
 	Department of Computer Science \\ 
 	National University of Singapore \\%, Republic of Singapore \\
 	\texttt{arun@comp.nus.edu.sg}
  	\And
  	Manjesh K. Hanawal \\
  	Department of IEOR \\
  	IIT Bombay, Mumbai,	India \\
  	\texttt{mhanawal@iitb.ac.in}
 }
\begin{document}

	\maketitle

	\begin{abstract}
        This paper studies a new variant of the stochastic multi-armed bandits problem where auxiliary information about the arm rewards is available in the form of control variates. In many applications like queuing and wireless networks, the arm rewards are functions of some exogenous variables. The mean values of these variables are known a priori from historical data and can be used as control variates.  Leveraging the theory of control variates, we obtain mean estimates with smaller variance and tighter confidence bounds. We develop an upper confidence bound based algorithm named UCB-CV and characterize the regret bounds in terms of the correlation between rewards and control variates when they follow a multivariate normal distribution. We also extend UCB-CV to other distributions using resampling methods like Jackknifing and Splitting. Experiments on synthetic problem instances validate performance guarantees of the proposed algorithms.	
    \end{abstract}

	\section{Introduction}
	\label{sec:introduction}
	%!TEX root =  main.tex

Stochastic Multi-Armed Bandits (MAB) setting has been extensively used to study decision making under uncertainty \citep{BIOMETRIKA33_thompson1933likelihood, NOW12_bubeck2012regret,lattimore_szepesvari_2020}. In the classical setting, it is assumed that the arm rewards are independent of each other. After playing an arm, the learner observers independent and identically distributed reward samples. The exploration versus exploitation trade-off is a fundamental problem in the bandits setting, and a learner can accumulate more rewards if it is balanced well. A better balance can be achieved if the learner can estimate arm's mean rewards with tighter confidence bounds \citep{ML02_auer2002finite,PMH10_auer2010ucb,COLT11_garivier2011kl} and smaller variance \citep{TCS09_audibert2009exploration,AAAI18_mukherjee2018efficient}. Any available side or auxiliary information can aid in building tighter confidence bounds. In this paper, we study the improvement in the performance that can be achieved when side information is available in the form of control variates.

Any observable input that is correlated with the random variable of interest can be a Control Variate (CV). When the mean of the CV is known, it becomes useful to reduce the variance of the mean estimator of the random variable. CV method is a popular variance reduction technique to improve the estimates' precision without altering the simulation runs or an increase in the number of samples \citep{OR82_lavenberg1982statistical}. Many stochastic simulations involve inputs generated using known distributions. These inputs can be potential CVs as output could be correlated with them. It has motivated the development of rich theory to analyze the quality of the estimators resulting from CVs \citep{MS82_lavenberg1981perspective, OR82_lavenberg1982statistical, OR90_nelson1990control}. We adopt these techniques to study the stochastic multi-armed bandits problem when CVs are available for the arms.

In many real-life applications, exogenous variables influence arms' rewards and act as CVs. For example, consider a job aggregating platform that assigns jobs to one of the servers/workers in each slot. The jobs are of variable size, and the time to serve them depends on their size and other unknown extraneous factors. The platform observes the size of each job and knows their mean value from historical data. The random job sizes correlate with service times and can be used as a CV to estimate each server's mean service time. Another example is a wireless system where a transmitter can transmit packets on a channel in each slot. Successful transmission of the packets (quality) depends on random quantities like fading, interference, and channel noise on the selected channel. The transmitter can observe the fading value using pilot signals in each round and know their mean value from past observations. These fading values on a channel can be used as a CV to estimate the channel's quality.

The performance of a MAB algorithm depends on how good is the confidence interval of its mean reward estimators \citep{ML02_auer2002finite, PMH10_auer2010ucb, COLT11_garivier2011kl}. The tightness of these intervals depends on the variance of the estimators \citep{TCS09_audibert2009exploration, AAAI18_mukherjee2018efficient}. Naturally,  estimators with smaller variance for the same number of samples result in better performance. Thus CVs can be leveraged to improve the performance of bandits algorithms with smaller confidence intervals.  The reduction depends on how strongly the reward samples and associated CVs are correlated.

Linear methods are widely used for variance reduction using CVs where the new samples are generated taking the weighted sum of reward samples and centered CV samples. Then these new samples are used for the estimation of the mean rewards. The choice of the weight that results in maximum variance reduction depends on the variance of CV and its correlation with reward samples. In practice, both of these quantities are unknown and need to be estimated from the observed samples. However, using the estimated weight makes the new samples highly correlated and makes the analysis of confidence intervals challenging.

For multivariate normal distributions, tight confidence intervals can be constructed that hold with high probability for the mean reward estimators resulting from the linear control variates. Using these results, we develop a MAB algorithm that exploits CVs to improve the regret performance. For general distributions, we discuss resampling methods that result in unbiased estimators with better confidence intervals. Specifically, our contributions are as follows:
\begin{itemize}
	\item For the case where rewards and CVs are normally distributed, we develop an algorithm named Upper Confidence Bounds with Control Variates (\ref{alg:UCB-CV}) that uses the estimators based on the linear control variates. 
	
	\item In \cref{sec:normal_dist}, we show that the regret of \ref{alg:UCB-CV} is smaller by a factor $(1-\rho^2)$ compared to the existing algorithms when the rewards and CV have a multivariate normal distribution, where $\rho$ is the correlation coefficient of the reward and control variates.
	
	\item In \cref{sec:no_dist}, we discuss how to adapt UCB-CV for the general distributions using estimators and associated confidence intervals based on jackknifing, splitting, and batching methods.
	
	\item We validate the performance of \ref{alg:UCB-CV} on synthetically generated data in \cref{sec:experiments}.
\end{itemize}

\subsection{Related Work}
Our work uses side or auxiliary information available in the form of CVs to improve the variance of estimates. In the following, we discuss works that incorporate variance estimates and/or side information to improve the performance of multi-armed bandits algorithms.

{\bf Incorporating variance estimates:} Many stochastic multi-armed bandits algorithms like UCB1 \citep{ML02_auer2002finite}, DMED \citep{COLT10_honda2010asymptotically}, KL-UCB \citep{COLT11_garivier2011kl}, Thompson Sampling \citep{NIPS11_chapelle2011empirical,COLT12_agrawal2012analysis,ALT12_kaufmann2012thompson,AISTATS13_agrawal2013further}, assume that the rewards are bounded on some interval $[0,b]$ and define index of each arm based on its estimates of the mean rewards and ignore the variance estimates. The regret of these algorithms have optimal logarithmic bound in the number of rounds $(T),$ and grows quadratic in $b$, i.e., $\mathcal{O}((b^2/\Delta)\log T)$, where $\Delta$ is the smallest sub-optimality gap. UCB1-NORMAL  by \citep{ML02_auer2002finite} uses the variance estimates in the index of the arms and when rewards are Gaussian with variance $\sigma^2$, to achieve regret is of order $\mathcal{O}((\sigma^2/\Delta)\log T)$. This bound is even better than that of UCB1 when the variance is smaller than $b^2$ even though it has considered unbounded support. Further, the authors experimentally demonstrate that the UCB-Tuned algorithm \citep{ML02_auer2002finite}, which uses variance estimates in arm indices, significantly improves regret performance over UCB1 when rewards are bounded. Extending the idea of using `variance aware' indices, UCBV  \citep{TCS09_audibert2009exploration} algorithm achieves regret of order $\mathcal{O}((\sigma^2/\Delta+2b)\log T)$ thus confirming that making algorithms variance aware is beneficial. EUCBV \citep{AAAI18_mukherjee2018efficient} improves the performance of UCBV using an arm elimination strategy like used in UCB-Improved \citep{PMH10_auer2010ucb}.

{\bf Incorporating side information:} In the applications like advertising and web search, whether or not users like a displayed item could depend on their profile. Users' profile is often available to the platform, which could then be used as side information. Contextual bandits \citep{COLT08_dani2008stochastic, MOR10_rusmevichientong2010linearly, WWW10_li2010contextual,NIPS10_filippi2010parametric, NIPS11_abbasi2011improved, AISTATS11_chu2011contextual,ICML17_li2017provably, NIPS17_jun2017scalable, ICML16_zhang2016online} and MAB with covariates \citep{AS13_perchet2013multi} exploit such side information to learn the optimal arm. They assume a correlation structure (linearity, GLM, etc.) between mean reward and context where mean reward varies with side-information (context).

In contrast to contextual bandits, mean rewards do not vary in our unstructured setup with side information available in the form of CVs. We do not use CVs directly to decide which arm to select in each round but use them only to get better estimates of the mean rewards by exploiting their correlation with rewards. Our motivating examples from jobs scheduling and communication networks capture these unstructured bandits where CVs provide information about the rewards but do not alter their mean values.

Our confidence bounds for mean reward estimators are based on the variance of estimators rather than on the variance of the reward sample, as was the case in UCBV \citep{TCS09_UCBV_Audibert} and EUCBV \citep{AAAI18_mukherjee2018efficient}. The samples that we use to estimate mean rewards are not independent and identically distributed. Hence, we can not use the standard Bernstein inequality to get confidence bounds on mean and variance estimates.

CVs are used extensively for variance reduction \citep{JORS85_james1985variance, Willy14_botev2014variance,ICML16_chen2016scalable,ACL17_kreutzer2017bandit,ICML19_vlassis2019design} in the Monte-Carlo simulation of complex systems \citep{MS82_lavenberg1981perspective,OR82_lavenberg1982statistical,EJOR89_nelson1989batch,OR90_nelson1990control}. To the best of our knowledge, our work is the first to analyze the performance of stochastic bandits algorithms with  control variates.

	\section{Control Variate for Variance Reduction}
	\label{sec:control_variate}
	%!TEX root =  main.tex

Let $\mu$ be the unknown quantity that needs to be estimated, and $X$ be an unbiased estimator of $\mu$, i.e., $\EE{X} = \mu$. A random variable $W$ with known expectation $(\omega)$ is a CV if it is correlated with $X$. Linear control methods use errors in estimates of known random variables to reduce error in the estimation of an unknown random variable. For any choice of a coefficient $\beta$, define a new  estimator as
$
    \bar{X} = X + \beta(\omega-W).
$

It is straightforward to verify that its variance is given by
\als{
    \text{Var}(\bar{X}) 
    &= \text{Var}(X) + \beta^2\text{Var}(W) - 2\beta\text{Cov}(X,W).  \label{equ:varNewEst}
}
and it is minimized by setting $\beta$ to 
\eqs{
    \label{equ:optimalBeta}
    \beta^\star = \frac{\text{Cov}(X,W)}{\text{Var}(W)}.
}
The minimum value of the variance is given by $\text{Var}(\bar{X}) = (1 - \rho^2)\text{Var}(X)$, where $\rho$ is the correlation coefficient of $X$ and $W$. Larger the correlation, the greater the variance reduction achieved by the CV. In practice, the values of $\text{Cov}(X,W)$ and $\text{Var}(W)$ are unknown and need to be estimated to compute the best approximation for $\beta^\star$.

	\section{Problem Setting}		
	\label{sec:problem_setting}
	%!TEX root =  main.tex

We consider a stochastic multi-armed bandits problem with $K$ arms. The set of arms is represented by $[K] \doteq \{1, 2, \ldots, K\}$. In round $t$, the environment generates a vector $(\{X_{t,i}\}_{i \in [K]}, \{W_{t,i}\}_{i \in [K]})$. The random variable  $X_{t, i}$ denotes the reward of arm $i$ in round $t$, and form an Independent and Identically Distributed (IID) sequence drawn from an unknown but fixed distribution with mean $\mu_i$ and variance $\sigma_i^2$. The random variable $W_{t, i}$ is the Control Variate (CV) associated with the reward of arm $i$ in round $t$. These random variables are drawn from an unknown but fixed distribution with mean $\omega_i$ and variance $\sigma_{w, i}^2$ and form an IID sequence. The learner knows the values $\{\omega_i\}_{i \in [K]}$ but not the $\{\sigma_{w, i}^2\}_{i \in [K]}$.  The correlation coefficient between rewards and associated control variates of an arm $i$ is denoted by $\rho_i$.

In our setup, the learner observes the reward and associated CVs from the selected arm. We refer to this new variant of the multi-armed bandits problem as Multi-Armed Bandits with Control Variates (MAB-CV). The parameter vectors $\boldsymbol{\mu^\sigma} = \{\mu_i, \sigma_i^2\}_{i \in [K]}$, $\boldsymbol{\omega^\sigma} = \{\omega_i, \sigma_{w,i}^2\}_{i \in [K]}$, and $\boldsymbol{\rho} = \{\rho_i\}_{i \in [K]}$ identify an instance of MAB-CV problem, which we denote henceforth using $P = (\boldsymbol{\mu^\sigma}, \boldsymbol{\omega^\sigma}, \boldsymbol{\rho})$. The collections of all MAB-CV problems are denoted by $\cP$. For a problem instance $P \in \cP$ with mean reward vector $\boldsymbol{\mu}$, we denote the optimal arm as $i^\star = \argmax\limits_{i \in [K]} \mu_i.$

\begin{algorithm}[!ht]
    \renewcommand{\thealgorithm}{MAB-CV problem}
	\floatname{algorithm}{}
	\caption{with instance $(\boldsymbol{\mu^\sigma}, \boldsymbol{\omega^\sigma}, \boldsymbol{\rho})$}
	\label{alg:BwCV}
	For round $t$: 
	\begin{enumerate}
		\item \textbf{Environment} generates a vector $\boldsymbol{X_t} = (X_{t,1},\ldots,$ $X_{t,K}) \in \R^K$ and $\boldsymbol{W_t} = (W_{t,1},\ldots, W_{t,K}) \in \R^K$, where $\EE{X_{t,i}}=\mu_i$, Var$(X_{t,i}) = \sigma_i^2$, $\EE{W_{t,i}}=\omega_i$, Var$(W_{t,i}) = \sigma_{w,i}^2$, and the sequence $(X_{t,i})_{t\geq 1}$ and $(W_{t,i})_{t\geq 1}$ are IID for all $i\in [K]$.
		\item \textbf{Learner} selects an arm $I_t \in [K]$ based on past observation of rewards and CVs samples from the arms till round $t-1$.
		\item \textbf{Feedback and Regret:} The learner observes a random reward $X_{t,I_t}$ and associated CV  $W_{t,I_t}$, and then incurs penalty $(\mu_{i^\star} - \mu_{I_t})$.
	\end{enumerate}
\end{algorithm}

Let $I_t$ denote the arm selected by the learner in round $t$ based on the observation of past reward and control variate samples. The interaction between the environment and a learner is given in \ref{alg:BwCV}. We aim to learn policies that accumulate maximum reward and measure its performance by comparing its cumulative reward with that of an Oracle that plays the optimal arm in each round. Specifically, we measure the performance in terms of regret defined as follows:
\begin{equation}
	\label{eqn:regret}
	\Regret_T = T\mu_{i^\star} - \EE{\sum_{t=1}^{T} X_{t,I_t}}.
\end{equation}
Note that maximizing the mean cumulative reward of a policy is the same as minimizing the policy's regret. A good policy should have sub-linear expected regret, i.e., ${\Regret_T}/T \rightarrow 0$ as $T \rightarrow \infty$. 

Our goal is to learn a policy that is sub-linear with small regret. To this effect, we use CVs to estimate mean rewards with sharper confidence bounds so that the learner can have a better exploration-exploitation trade-off and start playing the optimal arm more frequently early.

	\section{Arms with Normally Distributed Rewards and Control Variates}
	\label{sec:normal_dist}
	%!TEX root =  main.tex

We first focus on the case where the rewards and associated CVs of arms have a multivariate normal distribution. We discuss the general distributions in the next section. To bring out the main ideas of the algorithm, we first consider the case where each arm is associated with only one CV. Motivated by the linear CV technique discussed in the previous section, we consider a new sample for an arm $i$ in round $t$ as follows:
\eq{
    \label{equ:observedSample}
    \bar{X}_{t,i} = X_{t,i} + \beta^*_i(\omega_i - W_{t,i}),
}
where $X_{t,i}$ is the $t^{\text{th}}$ reward, $W_{t,i}$ is the $t^{\text{th}}$ associated control variate with an arm $i$, $\omega_i=\EE{W_{t,i}}$, and $\beta_i^*=\text{Cov}(X_{t,i},W_{t,i})/\text{Var}(W_{t,i})$. Using $s$ such samples, the mean reward for an arm $i$ is defined as
$\hat\mu_{s,i}^c = (\sum_{r=1}^s \bar{X}_{r,i})/s.$
Let $\hat{\mu}_{s,i} = \frac{1}{s} \sum_{r=1}^s X_{r,i}$ and $\hat{\omega}_{s,i} = \frac{1}{s} \sum_{r=1}^s W_{r,i}$ denote the sample mean of reward and CVs of an arm $i$ from $s$ samples. Then $\hat\mu_{s,i}^c$ can be written as follows:
\eq{
    \label{equ:estAvgReward}
	\hat\mu_{s,i}^c = \hat\mu_{s,i} + \beta^*_{s,i}(\omega_i - \hat\omega_{s,i}).
}
Since the value of $\text{Cov}(X_i,W_i)$ and $\text{Var}(W_i)$ are unknown, the optimal coefficient $\beta_{s,i}^*$ need to be estimated. After having $s$ samples, it is  estimated as below and used in \cref{equ:estAvgReward}. 
\eq{
    \label{equ:estBeta}
    \hat\beta^*_{s,i}= \frac{\sum_{r=1}^{s} (X_{r,i} - \hat\mu_{s,i})(W_{r,i} - \omega_i)}{\sum_{r=1}^{s}(W_{r,i}-\omega_i)^2}.
}

When the variance of CV is known, it can be directly used to estimate $\beta^*_i$ by replacing estimated variance by actual variance of CV in the denominator of \cref{equ:estBeta}, i.e., replacing $\sum_{r=1}^{s}(W_{r,i}-\omega_i)^2/ (s-1)$ by $\sigma_{w,i}^2$. After observing reward and associated control variates for arm $i$, the value of $\hat\beta_i^\star$ is re-estimated. The value of $\hat\beta_i^\star$ depends on all observed rewards and associated control variates from arm $i$. Since all $\bar{X}_{., i}$ uses same $\hat\beta_i^\star$, this leads to correlation between the $\bar{X}_{., i}$ observations.

Let $\text{Var}(\bar{X}_i) = \sigma_{c,i}^2$ denote the variance of the linear CV samples and $\nu_{s,i}=\text{Var}(\hat{\mu}^c_{s,i})$ denote the variance the estimator using these samples. Since the mean reward estimator $(\hat{\mu}^c_{s,i})$ is computed using the correlated samples, we cannot use Bernstein inequality to get confidence bounds on mean reward estimates. However, when the rewards and CVs follow a multivariate normal distribution, the following results tell how to obtain an unbiased variance estimator and confidence interval for $\hat{\mu}_{s, i}^c$.
\begin{restatable}{lem}{varEst}
	\label{lem:varEst}
	Let the reward and control variate of each arm have a multivariate normal distribution.  After observing $s$ samples of reward and control variate from arm $i$, define
	$
	\hat\nu_{s,i} = \frac{Z_{s,i} \hat\sigma_{c,i}^2(s)}{s},
	$
	where $$Z_{s,i} = \left(1 - \frac{\left(\sum_{r=1}^{s} (W_{r,i} - \omega_i)\right)^2}{s\sum_{r=1}^{s} (W_{r,i} - \omega_i)^2}\right)^{-1} ~\text{ and }~~ \hat\sigma_{c,i}^2(s)= \frac{1}{s-2}\sum_{r=1}^{s} (\bar{X}_{r,i} - \hat\mu_{s,i}^c)^2,$$ then $\hat{\nu}_{s,i}$ is an unbiased variance estimator of $\hat{\mu}^c_{s,i}$, i.e., $\mathbb{E}[\hat{\nu}_{s,i}]=\text{Var}(\hat{\mu}^c_{s,i})$.
\end{restatable}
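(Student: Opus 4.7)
The plan is to condition on the control-variate observations $W_{1,i},\ldots,W_{s,i}$ and invoke standard Gaussian-regression identities. Under multivariate normality one may write
\[
X_{r,i}=\mu_i+\beta^*_i(W_{r,i}-\omega_i)+\epsilon_{r,i},\qquad \epsilon_{r,i}\sim\cN(0,\sigma_{c,i}^2),
\]
where $\epsilon_{r,i}$ is independent of $W_{r,i}$ and $\sigma_{c,i}^2=(1-\rho_i^2)\sigma_i^2$. Introduce the shorthand $a_r=W_{r,i}-\omega_i$, $\bar{a}=\hat{\omega}_{s,i}-\omega_i$, $\tilde{S}_a=\sum_r(a_r-\bar{a})^2$, and $S_a=\sum_r a_r^2=\tilde{S}_a+s\bar{a}^2$. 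A one-line algebraic identity gives the key rewrite
\[
\frac{Z_{s,i}}{s}=\frac{S_a}{s\tilde{S}_a}=\frac{1}{s}+\frac{\bar{a}^2}{\tilde{S}_a},
\]
which is precisely the leverage-based variance factor for a Gaussian linear-model prediction at the design point $W=\omega_i$.

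The heart of the argument is to identify $\hat{\mu}^c_{s,i}$ as the ordinary-least-squares predicted value at $a=0$ in the regression of $X_{r,i}$ on $a_r$ with an intercept. Using $\sum_r(X_{r,i}-\hat{\mu}_{s,i})=0$, the numerator of $\hat{\beta}^*_{s,i}$ can be rewritten as $\sum_r(X_{r,i}-\hat{\mu}_{s,i})(a_r-\bar{a})$, and plugging this into $\hat{\mu}^c_{s,i}=\hat{\mu}_{s,i}-\hat{\beta}^*_{s,i}\bar{a}$ yields exactly the OLS fit evaluated at $a=0$. Conditioning on $a=(a_1,\ldots,a_s)$, standard Gaussian linear-model theory then supplies three facts used in sequence: $\EE{\hat{\mu}^c_{s,i}\mid a}=\mu_i$ (conditional unbiasedness of the OLS predictor); $\mathrm{Var}(\hat{\mu}^c_{s,i}\mid a)=\sigma_{c,i}^2(1/s+\bar{a}^2/\tilde{S}_a)=(Z_{s,i}/s)\,\sigma_{c,i}^2$ (prediction variance at $a=0$); and, by Cochran's theorem applied to the two-parameter regression, $\sum_r(\bar{X}_{r,i}-\hat{\mu}^c_{s,i})^2/\sigma_{c,i}^2\sim\chi^2_{s-2}$ conditional on $a$, so $\EE{\hat{\sigma}_{c,i}^2(s)\mid a}=\sigma_{c,i}^2$.

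Combining the last two facts yields $\EE{\hat{\nu}_{s,i}\mid a}=(Z_{s,i}/s)\,\sigma_{c,i}^2=\mathrm{Var}(\hat{\mu}^c_{s,i}\mid a)$. Taking unconditional expectation and invoking the law of total variance, the between-$a$ term $\mathrm{Var}(\EE{\hat{\mu}^c_{s,i}\mid a})$ vanishes by the conditional unbiasedness above, leaving $\EE{\hat{\nu}_{s,i}}=\EE{\mathrm{Var}(\hat{\mu}^c_{s,i}\mid a)}=\mathrm{Var}(\hat{\mu}^c_{s,i})$, as required.

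The main obstacle is the identification step: verifying cleanly that, despite the algebraically nonstandard form of the denominator in the definition of $\hat{\beta}^*_{s,i}$, $\hat{\mu}^c_{s,i}$ truly coincides with the OLS prediction at $W=\omega_i$, and that the Gram matrix of the centred design reproduces exactly the factor $1/s+\bar{a}^2/\tilde{S}_a=Z_{s,i}/s$. Once this identification is in place the chi-square distribution of the residual sum of squares and the law of total variance finish the proof mechanically.
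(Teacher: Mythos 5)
Your proposal is correct in substance and reaches the lemma by a more self-contained route than the paper. The paper's proof (\cref{asec:oneCV}) works in the uncentered design with rows $(1,\ \omega_i-W_{r,i})$: it quotes \cref{fact:estmProp} to write $\mathrm{Var}(\hat\mu^c_{s,i})=\sigma_{c,i}^2(\boldsymbol{Y}^\top\boldsymbol{Y})^{-1}_{11}$, inverts the $2\times 2$ Gram matrix explicitly to obtain $(\boldsymbol{Y}^\top\boldsymbol{Y})^{-1}_{11}=Z_{s,i}/s$, and then imports the unbiasedness of $\hat\sigma^2_{c,i}(s)$ from \cref{fact:varEst} and the final identity $\EE{\hat\nu_{s,i}}=\mathrm{Var}(\hat\mu^c_{s,i})$ wholesale from Nelson's Theorem~1 (\cref{fact:normalEstProp}). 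You instead prove these ingredients directly: conditioning on the CV observations, the Gaussian conditional model, the leverage identity $Z_{s,i}/s=1/s+\bar a^2/\tilde S_a$ (which is exactly the paper's Gram-matrix computation rewritten in centered coordinates), Cochran's theorem for the $\chi^2_{s-2}$ residual sum of squares, and finally the law of total variance, with conditional unbiasedness killing the between-design term, to pass from the conditional statement to the unconditional one the lemma asserts. That last step is precisely what the paper delegates to the citation, so your argument is the more complete of the two.

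One caution: the ``identification obstacle'' you flag is real and cannot be dissolved purely by your recentring trick, because \cref{equ:estBeta} normalizes by $\sum_{r}(W_{r,i}-\omega_i)^2$ rather than by the sample-centered $\sum_r(W_{r,i}-\hat\omega_{s,i})^2$; recentring fixes the numerator (as you note) but not the denominator, so $\hat\beta^*_{s,i}$ as literally defined is the OLS slope shrunk by the factor $\tilde S_a/S_a$, and $\hat\mu^c_{s,i}$ is then not exactly the OLS prediction at $W=\omega_i$. This, however, is an inconsistency in the paper's own definitions rather than a defect specific to your argument: the paper's proof invokes \cref{fact:estmProp}, \cref{fact:varEst} and \cref{fact:normalEstProp}, which likewise pertain to the regression (sample-centered) coefficient, so it makes the same identification silently. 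If \cref{equ:estBeta} is read as the standard control-variate coefficient $\boldsymbol{S}_{X_iW_i}/\boldsymbol{S}_{W_iW_i}$, as Nelson's theorem requires, both your proof and the paper's go through as written.
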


The confidence interval for reward estimator is given by our next result.
\begin{restatable}{lem}{confProb}
    \label{lem:confProb}
    Let the conditions in \cref{lem:varEst} hold and $s$ be the number of reward and associated control variate samples from arm $i$ in round $t$. Then
    \eqs{
        \Prob{|\hat\mu_{s,i}^c - \mu_i| \ge V_{t,s}^{(\alpha)}\sqrt{{\hat\nu_{s,i}}}} \le 2/t^\alpha,
    }
    where $V_{t,s}^{(\alpha)}$ denote $100(1-1/t^\alpha)^{\text{th}}$ percentile value of the $t-$distribution with $s-2$ degrees of freedom and   $\hat\nu_{s,i}$ is an unbiased estimator for  variance  of $\hat{\mu}^c_{s,i}$.
\end{restatable}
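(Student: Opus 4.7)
The plan is to recognize $\hat\mu_{s,i}^c$ as the OLS intercept in the simple linear regression of the rewards on the shifted control variates, and then invoke the classical finite-sample distributional theory for Gaussian linear regression to conclude that the studentized statistic is $t_{s-2}$-distributed.

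First I would use the joint normality of $(X_{r,i},W_{r,i})$ to write, conditionally on $W_{r,i}$,
\[
X_{r,i} \;=\; \mu_i + \beta_i^{\star}(W_{r,i}-\omega_i) + \epsilon_{r,i}, \qquad \epsilon_{r,i} \overset{\text{iid}}{\sim} \cN(0,\sigma_{c,i}^2),
\]
with the noise independent of $W_{1,i},\dots,W_{s,i}$ and $\sigma_{c,i}^2=(1-\rho_i^2)\sigma_i^2$. A short rearrangement of \cref{equ:estAvgReward} and \cref{equ:estBeta} identifies $\hat\mu_{s,i}^c$ as exactly the OLS estimator of the intercept in this model with design columns $\mathbf{1}$ and $(W_{r,i}-\omega_i)_{r=1}^s$, and $\hat\sigma_{c,i}^2(s)$ from \cref{lem:varEst} as the standard unbiased residual-variance estimator for the same regression.

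Next I would apply, conditionally on $W_{1,i},\dots,W_{s,i}$, the textbook finite-sample theory for Gaussian simple linear regression: (i) $\hat\mu_{s,i}^c \sim \cN\bigl(\mu_i,\,\sigma_{c,i}^2 Z_{s,i}/s\bigr)$, where a direct $2\times 2$ determinant calculation shows that $Z_{s,i}/s$ equals the $(1,1)$-entry of $(D^\top D)^{-1}$, with $D$ the design matrix above, matching exactly the $Z_{s,i}$ of \cref{lem:varEst}; (ii) $(s-2)\hat\sigma_{c,i}^2(s)/\sigma_{c,i}^2 \sim \chi^2_{s-2}$; and (iii) $\hat\mu_{s,i}^c$ is independent of $\hat\sigma_{c,i}^2(s)$, as a consequence of Cochran's theorem applied to orthogonal projections onto the column space of $D$ and its orthogonal complement. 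Combining (i)--(iii), the studentized statistic
\[
T_{s,i} \;=\; \frac{\hat\mu_{s,i}^c-\mu_i}{\sqrt{\hat\nu_{s,i}}} \;=\; \frac{(\hat\mu_{s,i}^c-\mu_i)/\sqrt{\sigma_{c,i}^2 Z_{s,i}/s}}{\sqrt{\hat\sigma_{c,i}^2(s)/\sigma_{c,i}^2}}
\]
is, conditionally on $W$, the ratio of a standard normal to the square root of an independent $\chi^2_{s-2}/(s-2)$ variable; hence $T_{s,i}\sim t_{s-2}$ conditionally, and since this law does not depend on $W$, also unconditionally.

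The claimed bound then follows immediately from the symmetry of the $t$-distribution about $0$ and the definition of $V_{t,s}^{(\alpha)}$ as its $100(1-1/t^\alpha)$-percentile, giving $\Prob{|T_{s,i}|\ge V_{t,s}^{(\alpha)}}=2/t^\alpha$. The main delicate step is the independence claim (iii) together with the verification that the conditional variance of $\hat\mu_{s,i}^c$ matches $\sigma_{c,i}^2 Z_{s,i}/s$; everything else is routine bookkeeping already prepared for us by \cref{lem:varEst}.
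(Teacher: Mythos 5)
Your proposal is correct, but it takes a more self-contained route than the paper. The paper's own proof of \cref{lem:confProb} is essentially a two-line reduction: it invokes the control-variates result stated as \cref{fact:normalEstProp} (Theorem 1 of \citet{OR90_nelson1990control}) with $q=1$, which already asserts that each one-sided interval $\hat\mu^c_{s,i}\pm V_{t,s}^{(\alpha)}\sqrt{\hat\nu_{s,i}}$ covers $\mu_i$ with probability at least $1-1/t^\alpha$, and then combines the two one-sided statements to obtain the $2/t^\alpha$ bound. You instead reprove the content of that cited theorem for $q=1$: you condition on the control-variate samples, write $X_{r,i}=\mu_i+\beta_i^\star(W_{r,i}-\omega_i)+\epsilon_{r,i}$ with Gaussian noise of variance $(1-\rho_i^2)\sigma_i^2$, identify $\hat\mu^c_{s,i}$ with the OLS intercept, $\hat\sigma^2_{c,i}(s)$ with the residual mean square, and $Z_{s,i}/s$ with the $(1,1)$ entry of $(D^\top D)^{-1}$ (the same computation the paper carries out inside its proof of \cref{lem:varEst}), and then use normality of the intercept, the $\chi^2_{s-2}$ law of the scaled residual variance, and their independence via Cochran's theorem to conclude $T_{s,i}\sim t_{s-2}$ conditionally on the design and hence unconditionally; the final percentile/symmetry step coincides with the paper's algebra. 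Your route buys a proof that does not lean on the external control-variates theorem and makes the $s-2$ degrees of freedom transparent, at the cost of redoing classical Gaussian regression theory that the paper simply imports; the paper's citation-based argument is shorter and extends verbatim to $q$ control variates. One small caveat, which you share with the paper: the identification of $\hat\mu^c_{s,i}$ with the OLS intercept is exact when $\hat\beta^*_{s,i}$ uses the sample-centered denominator $\sum_{r}(W_{r,i}-\hat\omega_{s,i})^2$ (as in the multi-CV appendix and in Nelson's theorem), whereas \cref{equ:estBeta} as written centers the denominator at the known mean $\omega_i$; your argument, like the paper's own proofs, implicitly treats the estimator as the regression-based one.
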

We prove \cref{lem:varEst} and \cref{lem:confProb} using regression theory and control theory results \citep{OR90_nelson1990control}. The detailed proofs are given in \cref{asec:oneCV}. The proof uses the fact that the arm's rewards and CVs have a multivariate normal distribution. Therefore, we can use $t$-distribution for designing confidence intervals of mean reward estimators. Analogous to the Hoeffding inequality, this result shows that the probability of the estimate obtained by samples $\{\bar{X}_{t, i}\}$ deviating from the true mean of arm $i$ decays fast. Further, the deviation factor $V_{t,s}^{(\alpha)}\sqrt{{\hat\nu_{s,i}}}$ depends on the variance of the estimator $(\hat{\mu}^c_{s,i})$, which guarantees sharper confidence intervals. As we will see later in Lemma \ref{lem:estProp}, these confidence terms are smaller by a factor $(1-\rho_i^2)$ compared to the case when no CVs are used. Equipped with these results, we next develop a Upper Confidence Bound (UCB) based algorithm for the MAB-CV problem.

\subsection{Algorithm: \ref{alg:UCB-CV}}
Let $N_i(t)$ be the number of times the arm $i$ is selected by a learner until round $t$ and $\hat\nu_{N_i(t),i}$ be the unbiased sample variance of mean reward estimator $(\hat\mu_{N_i(t),i}^c)$ for arm $i$. Motivated from Lemma \ref{lem:confProb}, we define optimistic upper bound for mean reward estimate of arm $i$ as follows: 
\begin{equation}
	\label{equ:UCB}
	\text{UCB}_{t,i} = \hat\mu_{N_i(t),i}^c + V_{t,N_i(t)}^{(\alpha)}\sqrt{{\hat\nu_{N_i(t),i}}}.
\end{equation}

Using above values as UCB indices of arms, we develop an algorithm named \ref{alg:UCB-CV} for the MAB-CV problem. The algorithm works as follows: It takes the number of arms $K$, a constant $Q=3$ (number of  CV per arm $+~ 2$), and $\alpha>1$ (trades-off between exploration and exploitation) as input. Each arm is played $Q$ times to ensure the sample variance for observations $(\hat\sigma_{c,i}^2(s))$ can be computed (see Lemma \ref{lem:varEst}).

\begin{algorithm}[!ht] 
	\renewcommand{\thealgorithm}{UCB-CV}
	\floatname{algorithm}{}
	\caption{UCB based Algorithm for MAB-CV problem}
	\label{alg:UCB-CV}
	\begin{algorithmic}[1]
		\STATE \textbf{Input:} $K, ~Q, ~\alpha>1$
		\STATE Play each arm $i \in [K]$ $Q$ times
		\FOR{$t=QK+1, QK + 2, \ldots, $}
			\STATE $\forall i \in [K]:$ compute UCB$_{t-1,i}$ as given in \cref{equ:UCB}
			\STATE Select $I_t = \argmax\limits_{i \in [K]}$ UCB$_{t-1,i}$
			\STATE Play arm $I_t$ and observe $X_{t,I_t}$ and associated control variates $W_{t,I_t}$. Increment the value of $N_{I_t}(t)$ by one and re-estimate $\hat{\beta}^*_{N_{I_t}(t),I_t}$, $\hat\mu_{N_{I_t}(t),{I_t}}^c$ and $\hat\nu_{t,N_{I_t}(t)}$
		\ENDFOR
	\end{algorithmic}
\end{algorithm}

In round $t$, \ref{alg:UCB-CV} computes the upper confidence bound of each arm's mean reward estimate using \cref{equ:UCB} and selects the arm having highest upper confidence bound value. We denote the selected arm by $I_t$. After playing arm $I_t$, the reward $X_{t, I_t}$ and associated control variate $W_{t, I_t}$ are observed. After that, the value of $N_{I_t}(t)$ is updated and $\hat{\beta}^*_{N_{I_t}(t),I_t}$, $\hat\mu_{N_{I_t}(t),{I_t}}^c$ and $\hat\nu_{t,N_{I_t}(t)}$ are re-estimated. The same process is repeated for the subsequent rounds.

\subsection{Estimator with Multiple Control Variates}
In some applications, it could be possible that each arm is associated with multiple CVs. We denote the number of CVs with each arm as $q$. Let $W_{t,i,j}$ be the $j^{\text{th}}$ control variate of arm $i$ that is observed in round $t$, . Then the unbiased mean reward estimator for arm $i$ with associated CVs is given by
\eqs{
	\hat\mu_{s,i,q}^c = \hat\mu_{s,i} + \boldsymbol{\hat\beta}_{i}^{*\top} (\boldsymbol{\omega}_{i} - \boldsymbol{\hat\omega}_{s,i}),
}
where $\boldsymbol{\hat\beta}^*_i = \left(\hat\beta_{i,1}, \ldots, \hat\beta_{i,q}\right)^\top$, $\boldsymbol{\omega}_{i} = \left(\omega_{i,1}, \ldots, \omega_{i,q}\right)^\top$, and $\boldsymbol{\hat\omega}_{s,i}= \left(\hat\omega_{s,i,1}, \ldots, \hat\omega_{s,i,q}\right)^\top$.

Let $s$ be the number of rewards and associated CVs  samples for arm $i$,  $\boldsymbol{W}_i$ be the $s\times q$ matrix whose $r^{\text{th}}$ row is $\left(W_{r,i,1}, W_{r,i,2}, \ldots, W_{r,i,q} \right)$, and $\boldsymbol{X}_i=(X_{1,i}, \ldots, X_{s,i})^\top$. By extending the arguments used in \cref{equ:estBeta} to $q$ control variates, the estimated coefficient vector is given by 
\eqs{
    \boldsymbol{\hat\beta}^*_i = (\boldsymbol{W}_i^\top\boldsymbol{W}_i - s\boldsymbol{\hat\omega}_{s,i}\boldsymbol{\hat\omega}_{s,i}^\top )^{-1} (\boldsymbol{W}_i^\top \boldsymbol{X}_i - s{\boldsymbol{\hat\omega}_i}~\hat\mu_{s,i}).
}

We can generalize \cref{lem:varEst} and \cref{lem:confProb} for the MAB-CV problems with $q$ control variates and then use \ref{alg:UCB-CV} with $Q=q+2$ and appropriate optimistic upper bound for multiple control variate case. More details can be found in \cref{asec:multiCV}.

\subsection{Analysis of \ref{alg:UCB-CV}}
In our next result, we describe the property of our estimator that uses control variates. This result is derived from the standard results of control variates theory \citep{OR90_nelson1990control}.
\begin{restatable}{lem}{estProp}
    \label{lem:estProp}
    Let reward and control variates have a multivariate normal distribution and $q$ be the number of control variates. Then after having $s$ observations of rewards and associated control variates from arm $i$,
    \als
    {
        &\EE{\hat\mu_{s,i,q}^c} = \mu_i, \text{ and}\\
        & \text{Var}(\hat\mu_{s,i,q}^c) = \frac{s-2}{s-q-2}(1-{\rho^2_i})\text{Var}(\hat\mu_{s,i}),
    }
    where ${\rho^2_i} = \sigma_{X_i\boldsymbol{W}_i}\Sigma_{\boldsymbol{W}_i\boldsymbol{W}_i}^{-1}\sigma_{X_i\boldsymbol{W}_i}^\top/\sigma_{i}^2$ is the square of the multiple correlation coefficient, $\sigma_{i}^2 = \text{Var}(X_i)$, and $\sigma_{X_i\boldsymbol{W}_i} = (\text{Cov}(X_i,W_{i,1}), \ldots, \text{Cov}(X_i,W_{i,q}))$.
\end{restatable}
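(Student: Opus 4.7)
The plan is to recognize $\hat\mu_{s,i,q}^c$ as the ordinary least squares (OLS) intercept estimator for the regression of $X_i$ on centered control variates, and then invoke standard results from the Gaussian linear model together with inverse-Wishart moment identities. First, I would condition on the design $\boldsymbol{W}_i$. Because $(X_{r,i},\boldsymbol{W}_{r,i})$ is jointly multivariate normal, the conditional distribution admits the linear model
\[
X_{r,i} \;=\; \mu_i \;+\; \boldsymbol{\beta}_i^{*\top}(\boldsymbol{W}_{r,i}-\boldsymbol{\omega}_i) \;+\; \epsilon_{r,i}, \qquad \epsilon_{r,i}\mid\boldsymbol{W}_i \stackrel{\text{iid}}{\sim} \mathcal{N}(0,\sigma_{c,i}^2),
\]
with $\sigma_{c,i}^2=(1-\rho_i^2)\sigma_i^2$ by the standard conditional variance formula for the multivariate normal. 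A short calculation using the block formula $(\tilde{Z}^\top\tilde{Z})^{-1}\tilde{Z}^\top\boldsymbol{X}_i$ for the design matrix $\tilde{Z}=[\mathbf{1}\ |\ \boldsymbol{W}_i-\mathbf{1}\boldsymbol{\omega}_i^\top]$ will show that the OLS intercept equals precisely $\hat\mu_{s,i}+\boldsymbol{\hat\beta}_i^{*\top}(\boldsymbol{\omega}_i-\boldsymbol{\hat\omega}_{s,i})=\hat\mu_{s,i,q}^c$.

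With this identification, unbiasedness is immediate: conditional on $\boldsymbol{W}_i$ the OLS estimator is unbiased for the true intercept $\mu_i$, so $\mathbb{E}[\hat\mu_{s,i,q}^c\mid\boldsymbol{W}_i]=\mu_i$, and the law of iterated expectations gives $\mathbb{E}[\hat\mu_{s,i,q}^c]=\mu_i$.

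For the variance, I would use the law of total variance, $\text{Var}(\hat\mu_{s,i,q}^c)=\mathbb{E}[\text{Var}(\hat\mu_{s,i,q}^c\mid\boldsymbol{W}_i)]+\text{Var}(\mathbb{E}[\hat\mu_{s,i,q}^c\mid\boldsymbol{W}_i])$. The second term vanishes by the preceding paragraph. For the first term, the usual OLS sandwich formula together with block-matrix inversion yields
\[
\text{Var}\bigl(\hat\mu_{s,i,q}^c\mid\boldsymbol{W}_i\bigr)
\;=\;\sigma_{c,i}^2\left(\tfrac{1}{s}+(\boldsymbol{\hat\omega}_{s,i}-\boldsymbol{\omega}_i)^{\!\top} S_W^{-1}(\boldsymbol{\hat\omega}_{s,i}-\boldsymbol{\omega}_i)\right),
\]
where $S_W=\sum_{r=1}^s(\boldsymbol{W}_{r,i}-\boldsymbol{\hat\omega}_{s,i})(\boldsymbol{W}_{r,i}-\boldsymbol{\hat\omega}_{s,i})^\top$. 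Since $\boldsymbol{\hat\omega}_{s,i}-\boldsymbol{\omega}_i$ and $S_W$ are independent (standard fact for multivariate normal samples), and $S_W\sim\text{Wishart}_q(s-1,\Sigma_{\boldsymbol{W}_i\boldsymbol{W}_i})$, I would take expectations using the inverse-Wishart moment $\mathbb{E}[S_W^{-1}]=\Sigma_{\boldsymbol{W}_i\boldsymbol{W}_i}^{-1}/(s-q-2)$ and $\mathbb{E}[(\boldsymbol{\hat\omega}_{s,i}-\boldsymbol{\omega}_i)(\boldsymbol{\hat\omega}_{s,i}-\boldsymbol{\omega}_i)^\top]=\Sigma_{\boldsymbol{W}_i\boldsymbol{W}_i}/s$. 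The trace identity then collapses the quadratic form to $q/[s(s-q-2)]$, and substituting $\sigma_{c,i}^2=(1-\rho_i^2)\sigma_i^2$ with $\text{Var}(\hat\mu_{s,i})=\sigma_i^2/s$ delivers the stated factor $(s-2)/(s-q-2)\cdot(1-\rho_i^2)\text{Var}(\hat\mu_{s,i})$ after combining the $1/s$ and $q/[s(s-q-2)]$ terms.

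The main obstacle is keeping track of the exact constants in the Wishart-moment step, which is the only nontrivial calculation; the rest is essentially a rewriting of Gaussian regression theory. Since the whole argument is classical in the simulation/control-variate literature, I would alternatively invoke Nelson (1990) directly for the variance formula, noting that our setting with known CV mean $\boldsymbol{\omega}_i$ and jointly normal $(X_i,\boldsymbol{W}_i)$ matches the hypotheses there verbatim.
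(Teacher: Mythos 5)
Your proposal is correct, but it proves the lemma by a different route than the paper does. The paper's proof is a one-line appeal to Theorem 1 of Nelson (1990), reproduced in the appendix as Fact~3, which states the unbiasedness and the variance inflation factor $\frac{s-2}{s-q-2}(1-\rho_i^2)$ verbatim; your closing remark ("invoke Nelson (1990) directly") is exactly what the authors do. Your main argument instead rederives the result from scratch: identify $\hat\mu_{s,i,q}^c$ as the OLS intercept in the $\boldsymbol{\omega}_i$-centered Gaussian linear model, get unbiasedness by conditioning on the design, and compute the variance via the law of total variance, the fitted-value variance formula $\sigma_{c,i}^2\bigl(\tfrac{1}{s}+(\boldsymbol{\hat\omega}_{s,i}-\boldsymbol{\omega}_i)^{\top}S_W^{-1}(\boldsymbol{\hat\omega}_{s,i}-\boldsymbol{\omega}_i)\bigr)$, independence of $\boldsymbol{\hat\omega}_{s,i}$ and $S_W$, and the inverse-Wishart moment $\EE{S_W^{-1}}=\Sigma_{\boldsymbol{W}_i\boldsymbol{W}_i}^{-1}/(s-q-2)$; the constants indeed collapse to $\tfrac{1}{s}\cdot\tfrac{s-2}{s-q-2}$, so the calculation checks out. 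What your derivation buys is self-containedness: it makes explicit exactly where multivariate normality enters (homoscedastic Gaussian conditional model with $\sigma_{c,i}^2=(1-\rho_i^2)\sigma_i^2$, independence of sample mean and sample covariance, Wishart distribution of $S_W$) and makes visible the implicit sample-size requirement $s>q+2$ for the variance to be finite, which in the paper is hidden inside the citation and the initialization $Q=q+2$. What the paper's approach buys is brevity and alignment with the control-variates literature it builds on. One small bookkeeping point: your OLS-intercept identification matches the paper's multi-CV coefficient $\boldsymbol{\hat\beta}_i^{*}=\boldsymbol{S}_{W_iW_i}^{-1}\boldsymbol{S}_{X_iW_i}$ (centered at the sample mean), which is the relevant one for this lemma; the single-CV formula in the main text centers the denominator at $\omega_i$ rather than $\hat\omega_{s,i}$, so the identification should be stated for the $q$-CV estimator, as you have done.
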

 
The following is our main result which gives the regret upper bound of \ref{alg:UCB-CV}. The detailed analysis is given in \cref{asec:analysis}.
\begin{restatable}{thm}{regretBound}
	\label{thm:regretBound}	
	Let the conditions in \cref{lem:estProp} hold, $\alpha=2$, $\Delta_i = \mu_{i^\star} - \mu_i$ be the sub-optimality gap for arm $i \in [K]$, and $N_i(T)$ be the number of times sub-optimal arm $i$ selected in $T$ rounds. Let $C_{T,i,q} = \EE{\frac{N_i(T)- 2}{N_i(T)-q-2}\left(\frac{V_{T,N_i(T),q}^{(2)}}{V_{T,T,q}^{(2)}}\right)^2}$ for all $i$. Then the regret of \ref{alg:UCB-CV} in $T$ rounds is upper bounded by
	\eqs{
		\Regret_T \le \sum_{i \ne i^\star} \hspace{-0.5mm} \left( \frac{4(V_{T,T,q}^{(2)})^2 C_{T,i,q}(1-\rho_i^2)\sigma_i^2}{\Delta_i} + \frac{\Delta_i\pi^2}{3} + \Delta_i\right).
	}
\end{restatable}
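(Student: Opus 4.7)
The plan is to mirror Auer et al.'s classical UCB regret analysis, replacing the sub-Gaussian concentration bound by the $t$-distribution-based confidence interval from \cref{lem:confProb} and absorbing the resulting sample-size-dependent factors into the constant $C_{T,i,q}$. Starting from the standard decomposition $\Regret_T = \sum_{i \neq i^\star} \Delta_i \EE{N_i(T)}$, I would bound $\EE{N_i(T)}$ for each suboptimal arm $i$. For $t > QK$, whenever \ref{alg:UCB-CV} selects a suboptimal arm $I_t = i$, at least one of the following must hold at round $t$: (A) the optimal arm's UCB underestimates its mean, $\hat\mu^c_{N_{i^\star}(t-1),i^\star} + V^{(2)}_{t-1,N_{i^\star}(t-1),q}\sqrt{\hat\nu_{N_{i^\star}(t-1),i^\star}} \le \mu_{i^\star}$; (B) arm $i$'s mean estimate exceeds $\mu_i$ by more than its half-width; or (C) the half-width of arm $i$ is larger than $\Delta_i/2$, i.e.\ $\Delta_i^2 < 4\,(V^{(2)}_{t-1,N_i(t-1),q})^2\,\hat\nu_{N_i(t-1),i}$. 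For (A) and (B), \cref{lem:confProb} with $\alpha=2$ gives $\Prob{A_t}\le 1/t^2$ and $\Prob{B_t}\le 1/t^2$ by symmetry of the $t$-distribution, so $\sum_t(\Prob{A_t}+\Prob{B_t})\le \pi^2/3$, contributing the $\Delta_i\pi^2/3$ summand; the extra $+\Delta_i$ absorbs the initial forced $Q$ plays together with rounding slack in the threshold argument.

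The main work is event (C). Considering the last round at which (C) causes arm $i$ to be played, one has $N_i(t-1)=N_i(T)-1$, and multiplying (C) through by $N_i(t-1)$ gives the sample-path inequality
\[
(N_i(T)-1)\,\Delta_i^2 \;<\; 4\,\bigl(V^{(2)}_{T,\,N_i(T)-1,\,q}\bigr)^2\,(N_i(T)-1)\,\hat\nu_{N_i(T)-1,\,i},
\]
where I have upper-bounded $V^{(2)}_{t-1,\cdot,q}$ by $V^{(2)}_{T,\cdot,q}$ using monotonicity of the percentile in its first argument. Taking expectation and using \cref{lem:varEst} (unbiasedness of $\hat\nu$) together with \cref{lem:estProp} --- which supplies $s\cdot\text{Var}(\hat\mu^c_{s,i,q})=\frac{s-2}{s-q-2}(1-\rho_i^2)\sigma_i^2$ --- I would pull the worst-case factor $(V^{(2)}_{T,T,q})^2$ outside the expectation, so that the remaining random ratio inside is exactly $C_{T,i,q}$. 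This yields $\EE{N_i(T)}\Delta_i^2 \le 4(V^{(2)}_{T,T,q})^2\, C_{T,i,q}(1-\rho_i^2)\sigma_i^2 + (\pi^2/3+1)\Delta_i^2$. Dividing by $\Delta_i$ and summing over $i\neq i^\star$ produces the stated regret bound.

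The main technical obstacle is precisely this last step. Because $\hat\nu_{s,i}$ is a random quantity and the confidence percentile $V^{(2)}_{t,s,q}$ depends on the random play count $N_i(t-1)$, the classical UCB1 device of picking a deterministic threshold $u$ such that (C) becomes infeasible once $N_i(t-1)\ge u$ does not apply directly. The coupling between $\hat\nu_{N_i(T)-1,i}$ and $N_i(T)$ has to be treated with care --- either by conditioning on $N_i(T)$ and invoking the known conditional law of $\hat\nu_{s,i}$ given $s$ under normality (so that $\EE{\hat\nu_{s,i}\mid s}$ is replaced by $\text{Var}(\hat\mu^c_{s,i,q})$ via \cref{lem:varEst}), or by a Markov-type per-play bound whose sum is then re-assembled into the single-term expression defining $C_{T,i,q}$ using \cref{lem:estProp}.
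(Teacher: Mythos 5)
Your proposal follows essentially the same route as the paper: the decomposition $\Regret_T=\sum_{i\neq i^\star}\Delta_i\EE{N_i(T)}$, the same three-event case analysis with the $t$-distribution confidence bounds contributing $\pi^2/3$, and the same resolution of the random-threshold issue by conditioning on $N_i(T)$, using unbiasedness of $\hat\nu$ given the sample count and \cref{lem:estProp} to produce the factor $\frac{s-2}{s-q-2}(1-\rho_i^2)\sigma_i^2$ absorbed into $C_{T,i,q}$. The only point the paper makes explicit that you gloss over is that events (A) and (B) also involve a random sample count, handled there by a separate conditioning lemma (\cref{lem:randomBound}), but this is a minor bookkeeping step rather than a different approach.
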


Note that $\alpha$ is a hyper-parameter to trade-off between exploration and exploitation. \ref{alg:UCB-CV} works as long as $\alpha>1$ and the effect of $\alpha$ on the regret bound is through the term $\sum_{i=1}^\infty \frac{1}{t^\alpha}$. This term has a nice closed form value of $\pi^2/6$ for $\alpha=2$. Hence, we state the results with $\alpha=2$ and use it in the experiments as well. 

\begin{rem}
	Unfortunately, we cannot directly compare our bound with that of UCB based stochastic MAB algorithms like UCB1-NORMAL and UCBV as $V_{T, T, q}^{(2)}$ do not have closed-form expression. $V_{T, T, q}^{(2)}$ denotes $100(1-1/T^2)^\text{th}$ percentile of $t$-distribution with $T-q-1$ degrees of freedom.
\end{rem}

\begin{rem}
As the degrees of freedom decreases in $q$, $V_{T, T, q}^{(2)}$ increases in $q$ (see \cref{fig:VTTq_value_log} for small value of $T$ and \cref{fig:VTTq_value_log_largeT} for large value of $T$). However, this does not imply that the overall regret increases in $q$ as $\rho_i^2$ also increases in $q$. Hence dependency of regret on $q$ is delicate. It is also observed in  \cite{EJOR89_nelson1989batch} that having more CVs does not mean better variance reduction. One can empirically verify that $(V_{T, T, q}^{(2)})^2$ is upper bounded by ${3.726 \log T}$. This upper bound holds for all $q$ as long as $T \ge q + \max(32, 0.7q)$.
\end{rem}
\begin{figure}[H]
	\centering
	\subfloat[{\tiny $(V_{T,T,q}^{(2)})^2$ vs $3.726\log(T)$ for small $T$.}]{
		\label{fig:VTTq_value_log}	
		\includegraphics[scale=0.28]{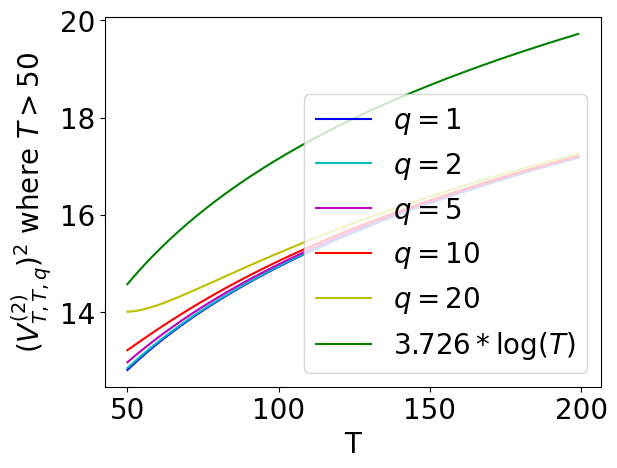}}
	\subfloat[{\tiny $(V_{T,T,q}^{(2)})^2$ vs $3.726\log(T)$ for large $T$.}]{
		\label{fig:VTTq_value_log_largeT}	
		\includegraphics[scale=0.28]{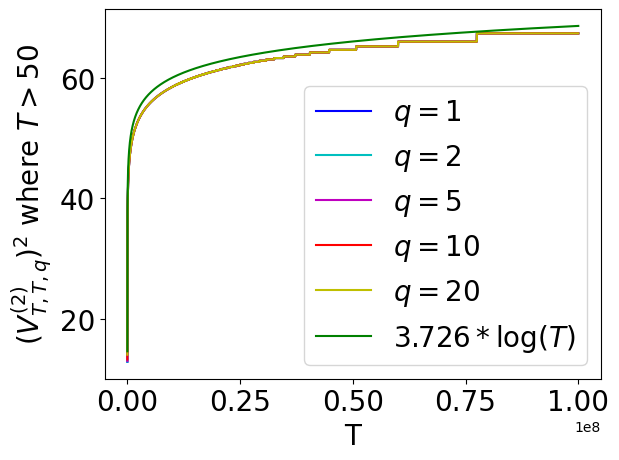}}
	\subfloat[{\tiny Variation in $\frac{V_{T,N_i(T),q}^{(2)}}{V_{T,T,q}^{(2)}}$ with $N_i(T)$.}]{\label{fig:Ratio}
		\includegraphics[scale=0.28]{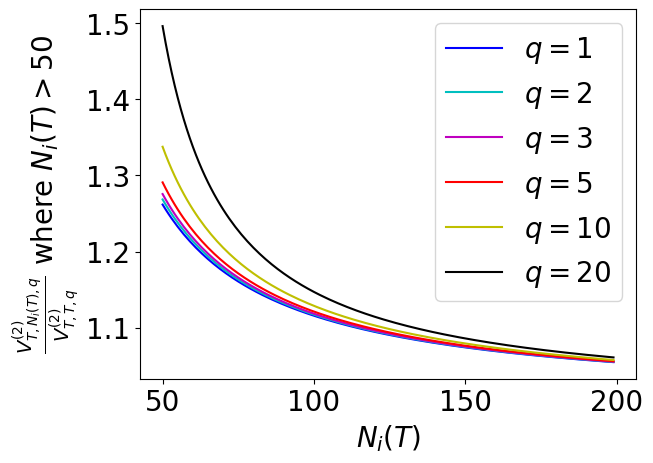}}
	\vspace{-1mm}
	\caption{Properties of $(V_{T,T,q}^{(2)})^2$ and ${V_{T,N_i(T),q}^{(2)}}/{V_{T,T,q}^{(2)}}$.}
\end{figure}

\vspace{-3mm}
\begin{rem}
	It is hard to bound $C_{T, i, q}$ as $V_{T, \cdot, q}^{(2)}$ does not have a explicit form. However, $C_{T, i, q}$ tends to 1 as $T \rightarrow \infty$. Indeed, from lower bound argument, we know that  $ N_i(T) \rightarrow \infty$ as $T \rightarrow \infty$ and hence $\frac{N_i(T) - 2}{N_i(T)-q-2} \searrow 1$ and $\frac{V_{T,N_i(T),q}^{(2)}}{V_{T,T,q}^{(2)}} \searrow 1$ (confidence interval shrink with more samples) as shown in \cref{fig:Ratio}. However, under some assumptions, we can obtain a bound empirically. Specifically, if $q \le 20$ and $N_i(T) > 50$, then $\frac{N_i(T) - 2}{N_i(T)-q-2}\leq 1.72$ and from $t$-distribution tables, we get $\frac{V_{T,N_i(T),q}^{(2)}}{V_{T,T,q}^{(2)}}\leq 1.5$ (see \cref{fig:Ratio}), hence the bound $C_{T, i, q}\leq 4$. We can guarantee $N_i(T)>50$ by playing each arm $50$ times at the beginning. Therefore, we can obtain one possible explicit bound in Lemma (6) by replacing $C_{T,i,q}$ with $4$. However, for a general case, an explicit bound in \cref{lem:expectedPulls} is hard.
\end{rem}

	\section{General Distribution}
	\label{sec:no_dist}
	%!TEX root =  main.tex

We now consider the general case with no distributional assumptions on reward and associated CVs of arms. The samples $\bar{X}_{t, i,q}$ resulting from the linear combination of rewards and CVs are dependent but need not be normally distributed. Hence $\hat\mu_{s,i,q}^c$ need not remain an unbiased estimator. Therefore, we cannot use the $t$-distribution properties to obtain confidence intervals. However, we can use resampling methods such as jackknifing, splitting, and batching to reduce the bias of the estimators and develop confidence intervals that hold approximately or asymptotically. These confidence intervals can then be used to defined indices for the arms. Below we briefly discuss these methods.

\subsection{Jackknifing}
\label{ssec:jackknifing}
In statistics, jackknifing is a well-known re-sampling technique for variance and bias estimation \citep{Book82_efron1982jackknife, TAS83_efron1983leisurely}. We start with a classical presentation of jackknifing for CVs \citep{OR82_lavenberg1982statistical}. Let $\hat\mu_{s,i,q}^{c,-j}$ be the estimate of the mean reward that is computed without sample $\bar{X}_j$. The  $Y_{j,i,q}^{\text{J}} = s\hat\mu_{s,i,q}^{c} - (s-1)\hat\mu_{s,i,q}^{c,-j}$, which is sometimes called the $j^{\text{th}}$ pseudo-value. Using pseudo-values mean reward estimation  given as $\hat\mu_{s,i,q}^{c, \text{J}} = \frac{1}{s} \sum_{j=1}^{s} Y_{j,i,q}^{\text{J}}$ and its sample variance is $\hat\nu_{s,i,q}^{\text{J}} = (s(s-1))^{-1}$ $\sum_{j=1}^s (Y_{j,i,q}^{\text{J}} - \hat\mu_{s,i,q}^{c, \text{J}})^2$. $\hat\mu_{s,i,q}^{c, \text{J}}$ is shown to be unbiased \citep{EJOR89_nelson1989batch}[Thm. 7] when reward variance is bounded. Further, $\hat\mu_{s,i,q}^{c, \text{J}} \pm t_{\alpha/2}(n-1)\hat\nu_{s,i,q}^{\text{J}}$ is a asymptotically valid confidence interval \citep{NRL91_avramidis1991simulation} and holds approximately for finite number of samples. 

\subsection{Splitting}
\label{ssec:splitting}
Splitting is a technique to split the correlated observations into two or more groups, compute an estimate of $\boldsymbol{\beta}_i^*$ from each group, then exchange the estimates among the groups. \citet{OR90_nelson1990control} considers an extreme form of splitting, where he splits $s$ observations into $s$ groups. The $j^{\text{th}}$ observation is given by
\eqs{
    Y_{j,i,q}^{\text{S}} = X_{j,i} + \boldsymbol{\hat\beta}_i^{*-j}(\boldsymbol{\omega} - \boldsymbol{W}_{j,i}), ~~~ j \in [s],
}
where $\boldsymbol{\hat\beta}_i^{*-j}$ is estimated without $j^{\text{th}}$ observation and $\boldsymbol{W}_{j,i} = \left(W_{j,i,1}, \ldots, W_{j,i,q} \right)$ is the vector of CVs with $q$ elements associated with reward $X_{j,i}$. The point estimator for splitting method  is $\hat\mu_{s,i,q}^{c, \text{S}} = \frac{1}{s} \sum_{j=1}^{s} Y_{j,i,q}^{\text{S}}$ and its sample variance is $\hat\nu_{s,i,q}^{\text{S}} = (s(s-1))^{-1}$ $\sum_{j=1}^s (Y_{j,i,q}^{\text{S}} - \hat\mu_{s,i,q}^{c, \text{S}})^2$. Then $\hat\mu_{s,i,q}^{c, \text{S}} \pm t_{\alpha/2}(n-1)\hat\nu_{s,i,q}^{\text{S}}$ gives an approximate confidence interval \citep{OR90_nelson1990control}. 

Below we define UCB index for these methods.  Let $\hat\nu_{N_i(t),i,q}^{\text{G}}$ be the sample variance of mean reward estimator $(\hat\mu_{s,i,q}^{c,\text{G}})$ for arm $i$. We define the optimistic upper bound for mean reward for  $G \in \{J, S\}$ as follows: 
\eq{
    \label{equ:genDistUCB}
	\text{UCB}_{t,i,q}^{\text{G}} = \hat\mu_{N_i(t),i,q}^{c, \text{G}} + V_{t,N_i(t),q}^{\text{G},\alpha}\sqrt{\hat\nu_{N_i(t),i,q}^{\text{G}}}.
}
where $V_{t, s, q}^{\text{G}, \alpha}$ is the $100(1-1/t^\alpha)^{\text{th}}$ percentile value of the $t-$distribution with $s-1$ degrees of freedom.
Since the optimistic upper bounds defined in \cref{equ:genDistUCB} are valid only asymptotically \citep{OR90_nelson1990control, NRL91_avramidis1991simulation}, it cannot be used for any finite time regret guarantee.
However the above UCB indices can be used in \ref{alg:UCB-CV} to get an heuristic algorithm for the general case. We experimentally validate its performance in the next section.

	\section{Experiments}
	\label{sec:experiments}
	%!TEX root =  main.tex

We empirically evaluate the performance of \ref{alg:UCB-CV}  by comparing it with \ref{alg:UCB-CV} with UCB1 \citep{ML02_auer2002finite}, UCB-V \citep{TCS09_audibert2009exploration}, Thompson Sampling \citep{AISTATS13_agrawal2013further}, and EUCBV \citep{AAAI18_mukherjee2018efficient} on different synthetically generated problem instances. For all the instance we use we use $K=10$, $q=1$, and $\alpha=2$. All the experiments are repeated $100$ times and cumulative regret with a $95\%$ confidence interval (the vertical line on each curve shows the confidence interval) are shown.  Details of each instance are as follows:

{\it Instance 1:} The reward and associated CV  of this instance have a multivariate normal distribution. The reward of each arm has two components. We treated one of the components as CV. In round $t$, the reward of arm $i$ is given as follows:
\eqs{
    X_{t,i} = V_{t,i} + W_{t,i},
}
where $V_{t,i} \sim \cN(\mu_{v,i}, \sigma_{v,i}^2)$ and $W_{t,i} \sim \cN(\mu_{w,i}, \sigma_{w,i}^2)$. Therefore, $X_{t,i} \sim \cN(\mu_{v,i}+\mu_{w,i}, \sigma_{v,i}^2 + \sigma_{w,i}^2)$. We treat $W_{i}$ as CV of $X_i$. It can be easily shown that the correlation coefficient of $X_i$ and $W_i$ is $\rho_i = \sqrt{\sigma_{w,i}^2/(\sigma_{v,i}^2+\sigma_{w,i}^2)}$. For each $i$, we set $\mu_{v,i} = 0.6 - (i-1)*0.05$, $\mu_{w,i} = 0.3$, for arm $i \in [K]$. The value of $\sigma_{v,i}^2 = 0.1$ and $\sigma_{w,i}^2 = 0.1$ for all arms. Note that the problem instance has the same CV for all arms, but all the CVs observations for each arm are maintained separately. 

{\it Instance 2:} It is the same as the Instance $1$ except each arm has a different CV associated with its reward. The mean value of the CV associated with arm $i$ is set as $\mu_{w,i} = 0.8 - (i-1)*0.05$.

{\it Instance 3:}  It is the same as the Instance $2$ except the samples of reward and associated control variate are generated from Gamma distribution, where the value of scale is set to $1$.

{\it Instance 4:}  It is the same as the Instance $2$ except the samples of reward and associated control variate are generated from Log-normal distribution with the values of $\sigma_{v,i}^2 = 1$ and $\sigma_{w,i}^2 = 1$ for all arms.

\paragraph{Comparing regret of \ref{alg:UCB-CV} with existing algorithms} 
The regret of different algorithms for Instances $1$ and $2$ are shown in Figures \cref{fig:same} and \cref{fig:diff}, respectively. As see \ref{alg:UCB-CV} outperforms all the algorithms. We observe that Thomson Sampling has a large regret for normally distributed reward and CVs. Hence, we have not added regret of Thomson Sampling. Note that \ref{alg:UCB-CV} does not require a bounded support assumption, which is needed in all other algorithms. The regret of the EUCBV algorithm is closest to \ref{alg:UCB-CV}, but EUCBV can stick with the sub-optimal arm as it uses an arm elimination-based strategy, which has a small probability of eliminating the optimal arm.

\begin{figure}[!ht]
	\centering
	\subfloat[Instance 1]{\label{fig:same}
		\includegraphics[scale=0.3]{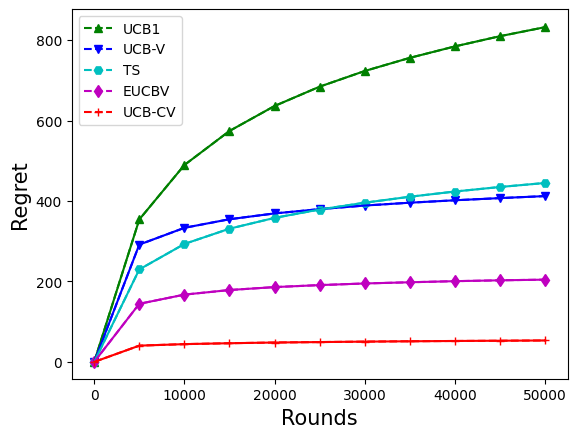}}
	\subfloat[Instance 2]{\label{fig:diff}
		\includegraphics[scale=0.3]{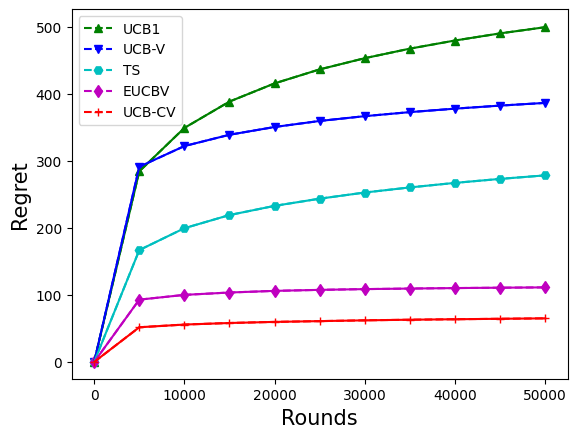}}
	\subfloat[Varying Correlation Coefficient.]{\label{fig:regretCorr}
		\includegraphics[scale=0.3]{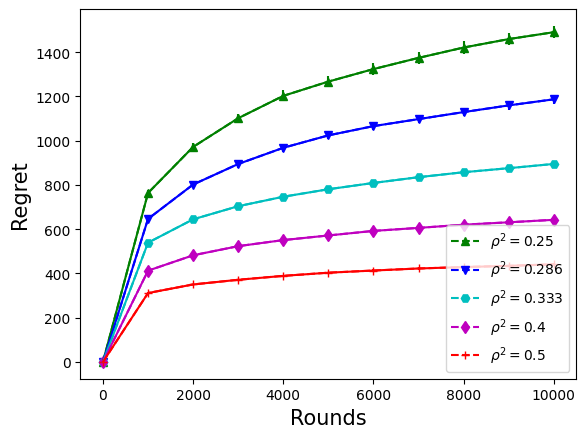}}
	
	\caption{Regret comparison of different multi-armed bandits algorithms with \ref{alg:UCB-CV}.}
	\label{fig:compRegret}
\end{figure}

\paragraph{Regret of \ref{alg:UCB-CV} vs Correlation Coefficient}
Theorem \ref{thm:regretBound} shows that \ref{alg:UCB-CV} have better regret bounds when correlation between arm rewards and CVs is higher. To validate this we derived problem instances having different correlation coefficients as follows:

{\it Instance 5:} Similar to the Instance $1$, the reward and associated CVs have a multivariate normal distribution with rewards expressed as sum of two components.  We set  $\mu_{v,i} = 6.0 - (i-1)*0.5$ and $\mu_{w,i} = 4.0$ for arm $i \in [K]$. The value of $\sigma_{v,i}^2 = 1.0$ and $\sigma_{w,i}^2 = 1.0$ for all arms. The problem instance has common CV for all arms but all the observations are maintained for each arm separately. As the correlation coefficient of $X_i$ and $W_i$ is $\rho_i = \sqrt{\sigma_{w,i}^2/(\sigma_{v,i}^2+\sigma_{w,i}^2)}$, we  varied $\sigma_{v,i}^2$ over the values $\{1.0, 1.5., 2.0, 2.5, 3.0\}$ to obtain problem instances with different correlation coefficient. Increasing $\sigma_{v,i}^2$ reduces the correlation coefficient of $X_i$ and $W_i$. We have plotted the regret of \ref{alg:UCB-CV} for different problem instances. As expected, we observe that the regret decreases as the correlation coefficient of reward and its associated control variates increases as shown in \cref{fig:regretCorr}.

\paragraph{Performance of Jackknifing, Splitting, and Batching Methods} We compare the regret of \ref{alg:UCB-CV} with Jackknifing, Splitting and Batching (see \cref{asec:batching} for more details) methods. The regret of the batching method is worse for all the instances. Whereas, the jackknifing and splitting are performing well for heavy-tailed distributions (Instance $3$) and \ref{alg:UCB-CV} performs better for normal distribution (Instance $2$) and Log-normal distribution (Instance $4$) as shown in \cref{fig:genDist}. 
\vspace{-5mm}
\begin{figure}[!ht]
	\centering
	\subfloat[Instance 2]{\label{fig:normal}
		\includegraphics[scale=0.3]{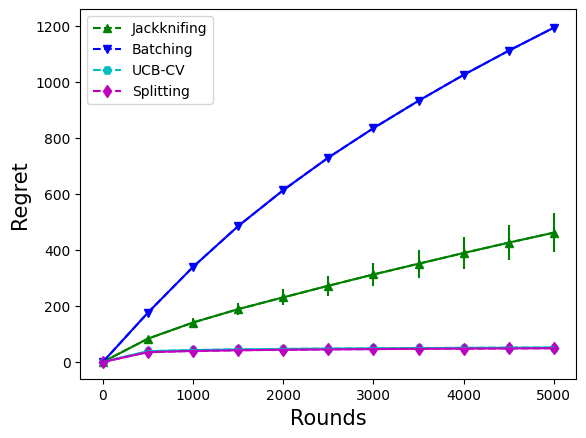}}
	\subfloat[Instance 3]{\label{fig:gamma}
		\includegraphics[scale=0.3]{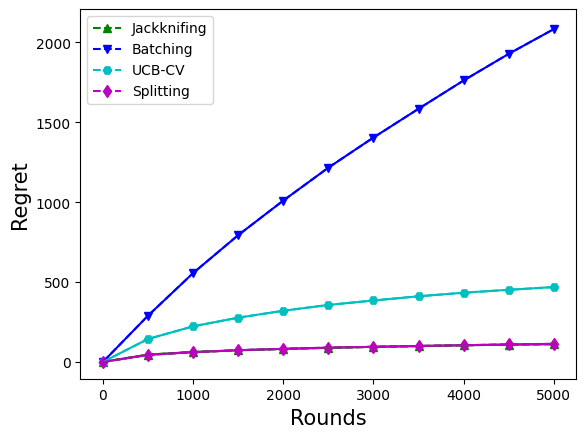}}
	\subfloat[Instance 4.]{\label{fig:log-normal}
		\includegraphics[scale=0.3]{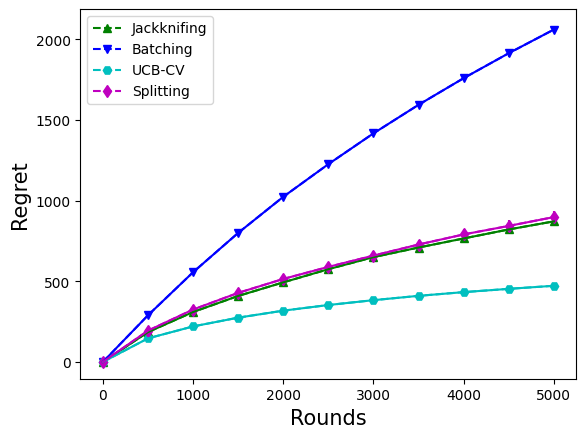}}
		
	\caption{Comparing regret of \ref{alg:UCB-CV} with Jackknifing, Splitting, and Batching method based.}
	\label{fig:genDist}
\end{figure}

\paragraph{Regret of \ref{alg:UCB-CV} vs Estimated Mean of CV}

Our work has demonstrated the applicability of CVs to bandit theory. To establish the gains analytically, we have to assume that mean of the CVs is known. However, it is not unreasonable as the mean of CVs can be constructed such that its mean value is known (see Eq. (7) and Eq. (8) of \cite{ACL17_kreutzer2017bandit}). If the mean of CV is unknown, we estimate it from the samples, but using the estimated/approximated means can deteriorate the performance of the proposed algorithm.s Though the theory of control variate is well developed for known mean, and only empirical studies are available to the case when the CV means are known approximately \citep{IEE01_schmeiser2001biased, IIE12_pasupathy2012control}.

To know the effect of approximation error $(\epsilon)$ in the mean estimation of CVs, we run an experiment with Instance $1$, Instance $2$, and Instance $5$. We assume that the approximated mean of CVs is given by $\omega_i + \epsilon$. We observe that the regret of UCB-CV increases with an increase in approximation error. UCB-CV can even start performing poorly than the existing state-of-the-art algorithm for significant large approximation errors as shown in \cref{fig:RegretVsError}. Since the maximum and minimum reward gap can be more than one for the Instance $5$, we must multiply confidence intervals of UCB1, UCB-V, and EUCBV with the appropriate factor (used $6$ for the experiment) for a fair comparison. We also observe that Thompson Sampling has almost linear regret for the Instance $5$. We believe that it is due to the high variance of arms' rewards that leads to overlapping of rewards distributions.
\vspace{-5mm}
\begin{figure}[H]
	\centering
	\subfloat[Instance 1]{\label{fig:Instance1}
		\includegraphics[scale=0.3]{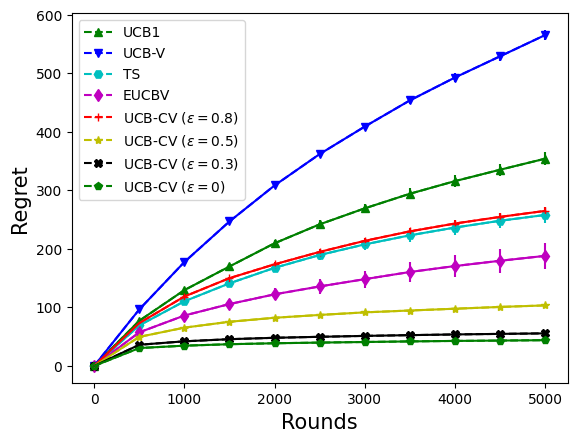}}
	\subfloat[Instance 2]{\label{fig:Instance2}
		\includegraphics[scale=0.3]{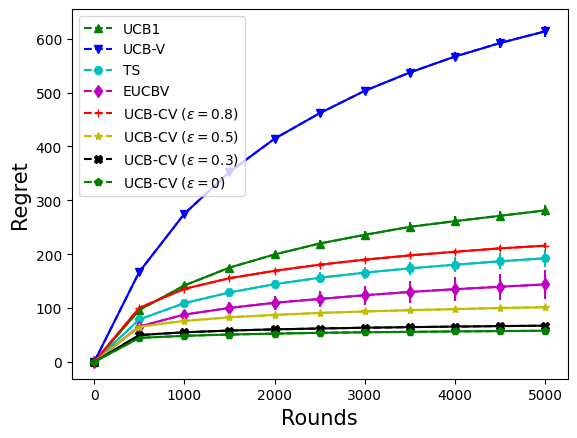}}
	\subfloat[Instance 5.]{\label{fig:Instance5}
		\includegraphics[scale=0.3]{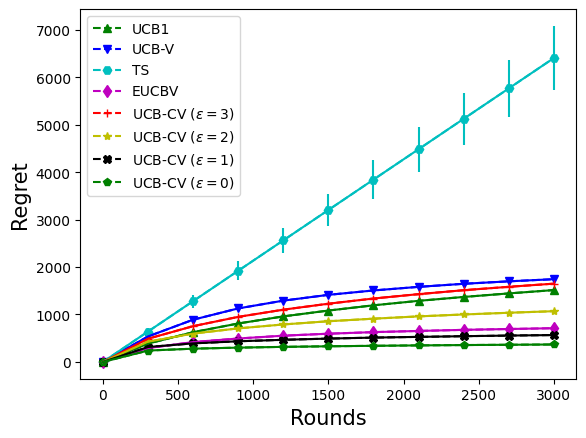}}
	
	\caption{Regret comparison of \ref{alg:UCB-CV} with varying approximation error in the mean estimation of control variates and different multi-armed bandits algorithms.}
	\label{fig:RegretVsError}
\end{figure}
\vspace{-3mm}
Our experimental results demonstrate that as long as the error in the estimation of the mean of CVs is within a limit, there will be an advantage using control variates. The impact of such approximations in bandits needs to be analyzed extensively and demands independent work.

	\section{Conclusion}
	\label{sec:conclusion}
	%!TEX root =  main.tex

In this work, we studied stochastic multi-armed bandits problem with side information available in the form of Control Variate (CV).  Leveraging the linear control variates' variance reduction properties, we developed a variant of the UCB algorithm named UCB-CV. When reward and CVs have a multivariate normal distribution, we showed that the regret of UCB-CV, which is of the order $\mathcal{O}(\sigma^2(1-\rho^2)/\Delta\log T)$ where $\sigma^2$ is the maximum variance of arm rewards and $\rho$ is the minimum correlation between the arm rewards and CVs. As expected, our the bounds showed that when the correlation is strong, one can get significant improvement in regret performance. For the case where the reward and control variates follow a general distribution, we discussed Jackknifing and splitting resampling that will give estimators with smaller bias and sharper confidence bounds.

The variance reduction techniques based on CVs assume that the exact mean of the CVs are known.  
%We will adapt these results to the bandits setting in our future study.
In practice, only some rough estimates of the mean of CVs rather than the exact values may be available.  It would be interesting to study how does it affect the performance of the bandit algorithms. Another interesting direction to study how CVs are helpful if one has to rank the arms instead of just finding the top-ranked arm.

    \section*{Acknowledgements}
    Manjesh K. Hanawal is supported by INSPIRE faculty fellowship from DST and Early Career Research Award (ECR/2018/002953) from SERB, Govt. of India. This work was done when Arun Verma was at IIT Bombay.

	\bibliographystyle{plainnat} % Other options: unsrtnat, unsrt, abbrvnat, plainnat, rusnat
	\bibliography{ref}

	\ifsup
		\newpage
		\onecolumn
		\centerline{\Large \bf Appendix} 
		\vspace{-2mm}
		\hrulefill 
		
		\appendix
		\label{sec:appendix}
		%!TEX root =  main.tex

\section{Control Variates and Regression Theory}
Consider the following regression problem with $n$ samples and $p$ features:
\eqs{
	Y_i = \boldsymbol{X}_i^\top\boldsymbol{\theta} + \epsilon_i, ~~~ i \in \{1,2, \ldots, n\}
}
where $Y_i \in \R$ is the $i^{\text{th}}$ target variable, $ \boldsymbol{X}_i \in \R^p$ is the $i^{\text{th}}$ feature vector, $\boldsymbol{\theta} \in \R^p$ is the unknown regression parameters, and $\epsilon_i$ is a normally distributed noise with mean $0$ and constant variance $\sigma^2$. The values of noise $\epsilon_i$ form a IID sequence and are independent of $\boldsymbol{X}_i $. Let

$$
\boldsymbol{Y} = \begin{pmatrix}
	Y_1 \\
	\vdots \\
	Y_n
\end{pmatrix},
\qquad
\boldsymbol{X}^\top = \begin{pmatrix}
	X_{11}~~ \ldots~~ X_{1p}\\
	\vdots  ~~~~~~ \cdots ~~~~~~\vdots\\
	X_{n1}~~\ldots ~~X_{np}
\end{pmatrix}, \text{ and}
\quad
\boldsymbol{\epsilon} = \begin{pmatrix}
	\epsilon_1 \\
	\vdots \\
	\epsilon_n
\end{pmatrix}.
$$

The least square estimator is given by $\boldsymbol{\hat\theta} = (\boldsymbol{X}\boldsymbol{X}^\top)^{-1}\boldsymbol{X}\boldsymbol{Y}$. Next we give the finite sample properties of $\boldsymbol{\hat\theta}$ that are useful to prove the regret upper bounds.

\begin{fact}
	\label{fact:estmProp}
	The finite sample properties of the least square estimator of  $\boldsymbol{\theta}$:
	\als{
		&1.~\EE{\boldsymbol{\hat\theta} | \boldsymbol{X}} = \boldsymbol{\theta},&~\text{(unbiased estimator)} \\
		&2.~\text{Var}(\boldsymbol{\hat\theta} | \boldsymbol{X}) = \sigma^2(\boldsymbol{X}\boldsymbol{X}^\top)^{-1}, \text{ and} ~& \text{(expression for the variance)} \\
		&3.~\text{Var}(\boldsymbol{\hat\theta}_i | \boldsymbol{X}) = \sigma^2(\boldsymbol{X}\boldsymbol{X}^\top)_{ii}^{-1}, ~& \text{(element-wise variance)} 
	}
	where $(\boldsymbol{X}\boldsymbol{X}^\top)_{ii}^{-1}$ is the $ii-$element of the matrix  $(\boldsymbol{X}\boldsymbol{X}^\top)^{-1}$.
\end{fact}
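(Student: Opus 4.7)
The plan is to proceed by direct substitution into the closed-form least squares estimator and then invoke standard properties of conditional expectation. First I would write $\boldsymbol{Y} = \boldsymbol{X}^\top \boldsymbol{\theta} + \boldsymbol{\epsilon}$ and plug this into $\boldsymbol{\hat\theta} = (\boldsymbol{X}\boldsymbol{X}^\top)^{-1}\boldsymbol{X}\boldsymbol{Y}$. After the projection-like factor $(\boldsymbol{X}\boldsymbol{X}^\top)^{-1}(\boldsymbol{X}\boldsymbol{X}^\top)$ cancels to the identity, this yields the clean decomposition $\boldsymbol{\hat\theta} = \boldsymbol{\theta} + (\boldsymbol{X}\boldsymbol{X}^\top)^{-1}\boldsymbol{X}\boldsymbol{\epsilon}$, which does all the real work in the proof.

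For Claim~1 (unbiasedness), I would take the conditional expectation given $\boldsymbol{X}$ on both sides of the decomposition. Since $(\boldsymbol{X}\boldsymbol{X}^\top)^{-1}\boldsymbol{X}$ is $\boldsymbol{X}$-measurable, it can be pulled out of the conditional expectation; combined with the assumption that $\boldsymbol{\epsilon}$ is independent of $\boldsymbol{X}$ with zero mean, i.e.\ $\EE{\boldsymbol{\epsilon}\mid\boldsymbol{X}} = \boldsymbol{0}$, this gives $\EE{\boldsymbol{\hat\theta}\mid\boldsymbol{X}} = \boldsymbol{\theta}$ in one line.

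For Claim~2 (conditional covariance), I would exploit the same decomposition. Because $\boldsymbol{\theta}$ is deterministic it drops out, leaving $\text{Var}(\boldsymbol{\hat\theta}\mid\boldsymbol{X}) = \text{Var}\bigl((\boldsymbol{X}\boldsymbol{X}^\top)^{-1}\boldsymbol{X}\boldsymbol{\epsilon}\mid\boldsymbol{X}\bigr)$. Applying the affine identity $\text{Var}(AZ) = A\,\text{Var}(Z)\,A^\top$ with $A = (\boldsymbol{X}\boldsymbol{X}^\top)^{-1}\boldsymbol{X}$, together with $\text{Var}(\boldsymbol{\epsilon}\mid\boldsymbol{X}) = \sigma^2 I$ (which follows from the IID normal noise being independent of $\boldsymbol{X}$), the middle factor reduces to a scalar multiple of the identity, and a second projection cancellation produces $\sigma^2(\boldsymbol{X}\boldsymbol{X}^\top)^{-1}$. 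Claim~3 is then immediate by reading off the $i$-th diagonal entry of the matrix from Claim~2, with no further argument required.

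The main obstacle, or rather the only subtlety, is justifying $\text{Var}(\boldsymbol{\epsilon}\mid\boldsymbol{X}) = \sigma^2 I$ at the conditional level rather than unconditionally. This rests on the explicit assumptions stated in the setup: the components of $\boldsymbol{\epsilon}$ are IID with common variance $\sigma^2$ and are jointly independent of $\boldsymbol{X}$, so the conditional covariance matches the unconditional one and the projection algebra goes through cleanly. Once this is noted, the remainder is mechanical linear algebra and I expect the full proof to be short.
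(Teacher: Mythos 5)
Your proposal is correct, and it is exactly the standard argument behind this fact: the paper itself gives no in-text proof, deferring instead to Proposition 1.1 of Hayashi's econometrics text (for Claims 1--2) and a separate reference for Claim 3, and those sources use precisely your decomposition $\boldsymbol{\hat\theta} = \boldsymbol{\theta} + (\boldsymbol{X}\boldsymbol{X}^\top)^{-1}\boldsymbol{X}\boldsymbol{\epsilon}$ followed by conditioning on $\boldsymbol{X}$ and the affine variance identity. Your observation that only $\EE{\boldsymbol{\epsilon}\mid\boldsymbol{X}}=\boldsymbol{0}$ and $\text{Var}(\boldsymbol{\epsilon}\mid\boldsymbol{X})=\sigma^2 I$ are needed (normality being superfluous here) is also accurate, so your write-up would in fact supply the details the paper leaves to its citations.
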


The first two properties are derived form Proposition 1.1 of \citep{Book_hayashi2000econometrics}, whereas the third property is taken from \citep{ESBS05_van2005estimation}. 
Next we give the finite sample properties of  estimator of variance $\sigma^2$.
\begin{fact}{\citep[Proposition 1.2]{Book_hayashi2000econometrics}}
	\label{fact:varEst}
	Let $\hat\sigma^2 = \frac{1}{n-p} \sum_{i=1}^n(Y_i -\boldsymbol{X}_i^\top\boldsymbol{\hat\theta})^2$ be estimator of $\sigma^2$ and $n>p$ (so that $\hat\sigma^2$ is well defined).  Then $\EE{\hat\sigma^2 | \boldsymbol{X}} = \sigma^2$ which implies that $\hat\sigma^2 $ is a unbiased estimator of $\hat\sigma^2$. 
\end{fact}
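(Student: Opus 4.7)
The plan is to express the sum of squared residuals as a quadratic form in the noise vector $\boldsymbol{\epsilon}$ and then evaluate its conditional expectation using the standard trace trick. First I would introduce the projection (``hat'') matrix $P := \boldsymbol{X}^\top(\boldsymbol{X}\boldsymbol{X}^\top)^{-1}\boldsymbol{X}$ and the residual-maker $M := I_n - P$. A direct check shows that $M$ is symmetric, idempotent, and satisfies $M\boldsymbol{X}^\top = 0$. Plugging $\boldsymbol{\hat\theta} = (\boldsymbol{X}\boldsymbol{X}^\top)^{-1}\boldsymbol{X}\boldsymbol{Y}$ into the residual vector gives $\boldsymbol{Y} - \boldsymbol{X}^\top\boldsymbol{\hat\theta} = M\boldsymbol{Y}$, and substituting the regression model $\boldsymbol{Y} = \boldsymbol{X}^\top\boldsymbol{\theta} + \boldsymbol{\epsilon}$ then collapses this to the key identity $\boldsymbol{Y} - \boldsymbol{X}^\top\boldsymbol{\hat\theta} = M\boldsymbol{\epsilon}$.

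Next, using symmetry and idempotence of $M$, the sum of squared residuals becomes
\[
\sum_{i=1}^n (Y_i - \boldsymbol{X}_i^\top\boldsymbol{\hat\theta})^2 \;=\; (M\boldsymbol{\epsilon})^\top (M\boldsymbol{\epsilon}) \;=\; \boldsymbol{\epsilon}^\top M \boldsymbol{\epsilon}.
\]
I would then take the conditional expectation given $\boldsymbol{X}$ and apply the trace trick: $\EE{\boldsymbol{\epsilon}^\top M \boldsymbol{\epsilon} \mid \boldsymbol{X}} = \text{tr}\bigl(M\,\EE{\boldsymbol{\epsilon}\boldsymbol{\epsilon}^\top \mid \boldsymbol{X}}\bigr) = \sigma^2\,\text{tr}(M)$, where the last step uses that $\boldsymbol{\epsilon}$ is independent of $\boldsymbol{X}$ with IID coordinates of mean zero and variance $\sigma^2$, so that $\EE{\boldsymbol{\epsilon}\boldsymbol{\epsilon}^\top \mid \boldsymbol{X}} = \sigma^2 I_n$.

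Finally, I would compute $\text{tr}(M)$ using cyclicity of the trace: $\text{tr}(P) = \text{tr}\bigl((\boldsymbol{X}\boldsymbol{X}^\top)^{-1}\boldsymbol{X}\boldsymbol{X}^\top\bigr) = \text{tr}(I_p) = p$, so $\text{tr}(M) = n - p$. Dividing the resulting identity $\EE{\sum_i (Y_i - \boldsymbol{X}_i^\top\boldsymbol{\hat\theta})^2 \mid \boldsymbol{X}} = (n-p)\sigma^2$ by $n-p > 0$ (which is exactly the hypothesis $n > p$) yields $\EE{\hat\sigma^2 \mid \boldsymbol{X}} = \sigma^2$, as desired. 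There is no real obstacle here: the whole argument is a textbook application of the residual-maker identity, and the only subtlety is the invertibility of $\boldsymbol{X}\boldsymbol{X}^\top$, which is already implicitly assumed since it was used in the definition of $\boldsymbol{\hat\theta}$ itself.
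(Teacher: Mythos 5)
Your argument is correct: the residual-maker identity $\boldsymbol{Y}-\boldsymbol{X}^\top\boldsymbol{\hat\theta}=M\boldsymbol{\epsilon}$ with $M=I_n-\boldsymbol{X}^\top(\boldsymbol{X}\boldsymbol{X}^\top)^{-1}\boldsymbol{X}$, the trace trick under $\EE{\boldsymbol{\epsilon}\boldsymbol{\epsilon}^\top\mid\boldsymbol{X}}=\sigma^2 I_n$, and $\mathrm{tr}(M)=n-p$ give exactly $\EE{\hat\sigma^2\mid\boldsymbol{X}}=\sigma^2$. The paper does not prove this statement itself but imports it from \citet[Proposition 1.2]{Book_hayashi2000econometrics}, and your proof is precisely the standard argument given there, so there is nothing to add.
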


Now recall the unknown parameter estimation problem with $q$ control variates and $t$ observations:
\eqs{
	\bar{X}_{s} = X_{s} +  \boldsymbol{\hat\beta}^{*\top} (\boldsymbol{\omega} - \boldsymbol{W}_{s}), ~~~ s \in \{1,2,\ldots, t\}.
}
where the subscript for arm index $i$ and number of control variates $q$ are dropped for simplicity. Under the assumption of multivariate normal distribution, we can write $\bar{X}_{s} $ as follows:
\eqs{
	\bar{X}_{s} = \mu +  \boldsymbol{\beta}^{*\top} (\boldsymbol{\omega}- \boldsymbol{W}_{s}) + \epsilon_s,
}
where $\epsilon_1, \ldots, \epsilon_t$ are IID normally distributed random variables with mean $0$ and variance $\sigma^2$. Let
$$
\boldsymbol{\bar{X}} = \begin{pmatrix}
	\bar{X}_1 \\
	\vdots \\
	\bar{X}_t
\end{pmatrix},
\qquad
\boldsymbol{Y} = \begin{pmatrix}
	1 &\omega_{1} - W_{11} ~ \ldots~ \omega_{q} - W_{1q}\\
	\vdots &\vdots  ~~~~~~ \cdots ~~~~~~\vdots\\
	1 &\omega_{1} - W_{t1} ~\ldots ~ \omega_{q} -  W_{tq}
\end{pmatrix}, 
\quad
\boldsymbol{\gamma} = \begin{pmatrix}
	\mu \\
	\beta^*
\end{pmatrix},\text{ and}
\quad
\boldsymbol{\epsilon} = \begin{pmatrix}
	\epsilon_1 \\
	\vdots \\
	\epsilon_t
\end{pmatrix}.
$$

Now let the least square estimator of $\mu$ be $\hat\mu$ and $\beta^\star$ be $\hat\beta^\star$.  Then using \cref{fact:estmProp},
\eqs{
	\text{Var}(\hat\mu) = \sigma^2 (\boldsymbol{Y}^\top\boldsymbol{Y})_{11}^{-1}
}
where $(\boldsymbol{Y}^\top\boldsymbol{Y})_{11}^{-1}$ is the upper left most element of matrix $(\boldsymbol{Y}^\top\boldsymbol{Y})^{-1}$ \citep{OR82_schmeiser1982batch}. 
Recall definitions of $\boldsymbol{S}_{W_iW_i}$, $\hat\mu_{t,i}^c$, and $\hat\omega_i$ (and ignore the arm index for now), Then after $t$ observations, the estimator of Var$(\hat\mu)$ is given by
\eq{
	\label{equ:varMulti}
	\hat\nu_t =  \frac{Z_{t} \hat\sigma^2(t)}{t}
} 
where $Z_{t} = \left(1 + \frac{(\boldsymbol{\hat\omega}_{t} - \boldsymbol{\omega})^\top\boldsymbol{S}_{WW}^{-1} (\boldsymbol{\hat\omega}_{t} - \boldsymbol{\omega})}{1-1/t}\right)$ and $ \hat\sigma^2(t)= \frac{1}{t-q-1}\sum_{s=1}^{t} (\bar{X}_s - \hat\mu_{t}^c)^2$ \citep{OR90_nelson1990control}. Using \cref{fact:varEst}, $\hat\sigma^2(t)$ is an unbiased estimator of $\sigma^2(t)$.
Next, we give the fundamental results  of control variates estimators.
\begin{fact}{\citep[Theorem 1]{OR90_nelson1990control}}
	\label{fact:normalEstProp}
	Let $Z_s = (X_s, W_{s1}, \ldots, W_{sq})^\top$. Suppose that $\{Z_1, \ldots, Z_t\}$ are IID $(q+1)-$variate normally distributed vectors with mean vector $(\mu, \boldsymbol{\omega})^\top$ and variance $\Sigma_{ZZ}$. Then
	\als
	{
		&\EE{\hat\mu_{t}^c} = \mu, \\
		& \text{Var}(\hat\mu_{t}^c) = \frac{t-2}{t-q-2}(1-{\rho^2})\text{Var}(\hat\mu_{t}), \\
		&\EE{\hat\nu_{t}} = \text{Var}(\hat\mu_{t}^c),\\
		& \Prob{\hat\mu_{t}^c -  V_{v,t,q}^{(\alpha)}\sqrt{{\hat\nu_{t}}} \le \mu } \ge 1- 1/v^\alpha,  \text{ and} \\
		& \Prob{\hat\mu_{t}^c +V_{v,t,q}^{(\alpha)}\sqrt{{\hat\nu_{t}}} \ge \mu} \ge 1- 1/v^\alpha,
	}
	where ${\rho^2} = \sigma_{X\boldsymbol{W}}\Sigma_{\boldsymbol{W}\boldsymbol{W}}^{-1}\sigma_{\boldsymbol{W}X}^\top/\sigma^2$ is the square of the multiple correlation coefficient, $\sigma^2 = \text{Var}(X)$, $\sigma_{X\boldsymbol{W}} = (\text{Cov}(X,W_{1}), \ldots, \text{Cov}(X,W_{q}))$, and $V_{v,t,q}^{(\alpha)}$ is the $100(1-1/v^\alpha)^{\text{th}}$ percentile value of the $t-$distribution with $t-q-1$ degrees of freedom.
	
\end{fact}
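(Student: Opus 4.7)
The plan is to apply the regression framework of the preceding appendix to the design matrix $\boldsymbol{Y}$ and response vector $(X_1,\ldots,X_t)^\top$. Under the multivariate normal assumption, the conditional distribution of $X_s$ given $\boldsymbol{W}_s$ is $\cN(\mu + \boldsymbol{\beta}^{*\top}(\boldsymbol{W}_s - \boldsymbol{\omega}),\, (1-\rho^2)\sigma^2)$, so the regression $X_s = \gamma_0 + \boldsymbol{\gamma}_1^\top(\boldsymbol{\omega} - \boldsymbol{W}_s) + \eta_s$ has IID Gaussian errors with true parameters $\gamma_0 = \mu$ and $\boldsymbol{\gamma}_1 = -\boldsymbol{\beta}^*$. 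The OLS intercept in this regression coincides with $\hat\mu_t^c$, and the residual-sum-of-squares estimator coincides with $(t-q-1)\hat\sigma^2(t)$. Hence \cref{fact:estmProp} and \cref{fact:varEst} apply directly after conditioning on $\boldsymbol{W} = (\boldsymbol{W}_1,\ldots,\boldsymbol{W}_t)$.

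Unbiasedness $\EE{\hat\mu_t^c}=\mu$ is immediate from \cref{fact:estmProp}(1) via the tower rule. For the variance identity, \cref{fact:estmProp}(3) gives $\text{Var}(\hat\mu_t^c \mid \boldsymbol{W}) = (1-\rho^2)\sigma^2 \,(\boldsymbol{Y}^\top\boldsymbol{Y})^{-1}_{11}$. A block-matrix inversion of $\boldsymbol{Y}^\top\boldsymbol{Y}$ (whose first column is all-ones and whose remaining block has row sums zero only after re-centering) yields $(\boldsymbol{Y}^\top\boldsymbol{Y})^{-1}_{11} = Z_t/t$ with $Z_t$ exactly as in \cref{equ:varMulti}. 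Because $\EE{\hat\mu_t^c \mid \boldsymbol{W}} = \mu$ does not depend on $\boldsymbol{W}$, the tower rule then gives $\text{Var}(\hat\mu_t^c) = (1-\rho^2)\sigma^2\,\EE{Z_t}/t$. Computing $\EE{Z_t}$ reduces to the Hotelling $T^2$ moment
$\EE{(\boldsymbol{\hat\omega}_t - \boldsymbol{\omega})^\top \boldsymbol{S}_{WW}^{-1} (\boldsymbol{\hat\omega}_t - \boldsymbol{\omega})} = \frac{q(t-1)}{t(t-q-2)}$,
which is obtained from the $F(q,\,t-q)$ distribution of $(t-q)T^2/(q(t-1))$ under joint normality together with $\EE{F(q,t-q)} = (t-q)/(t-q-2)$. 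Substituting produces $\EE{Z_t} = (t-2)/(t-q-2)$, and combined with $\text{Var}(\hat\mu_t)=\sigma^2/t$ this yields the claimed variance formula. Unbiasedness of $\hat\nu_t$ then follows from \cref{fact:varEst} applied conditionally (giving $\EE{\hat\sigma^2(t)\mid\boldsymbol{W}}=(1-\rho^2)\sigma^2$) together with the same $\EE{Z_t}$ identity.

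For the two one-sided confidence inequalities, I will invoke the classical normal-regression decomposition: conditional on $\boldsymbol{W}$, the fitted coefficients and the residual variance estimator $\hat\sigma^2(t)$ are independent, with $\hat\mu_t^c$ Gaussian around $\mu$ with variance $(1-\rho^2)\sigma^2(\boldsymbol{Y}^\top\boldsymbol{Y})^{-1}_{11}$ and $(t-q-1)\hat\sigma^2(t)/\bigl((1-\rho^2)\sigma^2\bigr) \sim \chi^2_{t-q-1}$. Consequently the pivot $(\hat\mu_t^c - \mu)/\sqrt{\hat\nu_t}$ is, conditional on $\boldsymbol{W}$, a ratio of a standard normal over the square root of an independent scaled chi-squared divided by its degrees of freedom, hence a Student $t_{t-q-1}$ variable. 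Since this conditional distribution does not depend on $\boldsymbol{W}$, the unconditional distribution is also $t_{t-q-1}$, and each one-sided bound follows directly from the definition of $V_{v,t,q}^{(\alpha)}$ as the $100(1-1/v^\alpha)^{\text{th}}$ percentile of that $t$-distribution.

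The main obstacle is the inverse-Wishart / Hotelling $T^2$ moment computation that yields $\EE{Z_t} = (t-2)/(t-q-2)$, together with the block-matrix algebra required to identify $(\boldsymbol{Y}^\top\boldsymbol{Y})^{-1}_{11}$ with $Z_t/t$; the Gaussian assumption is essential here because the $F$-distribution identity relies on $\boldsymbol{\hat\omega}_t$ being normal and independent of $\boldsymbol{S}_{WW}$. A secondary care point is the conditional independence of $\hat\mu_t^c$ and $\hat\sigma^2(t)$, needed to recognize the Student $t$ pivot; this rests on the orthogonality between fitted values and residuals in Gaussian linear regression and must be verified in the given parametrization before the $t$-distribution conclusion is drawn.
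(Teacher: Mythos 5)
Your argument is correct, but there is nothing in the paper to compare it against: the paper imports this statement as a Fact, citing Theorem~1 of \citet{OR90_nelson1990control}, and never proves it. What you have done is reconstruct the standard proof of that cited theorem, and your reconstruction is sound and consistent with the machinery the paper does record in its appendix. Conditioning on the CV observations turns the problem into Gaussian linear regression with error variance $(1-\rho^2)\sigma^2$, the OLS intercept is exactly $\hat\mu_t^c$ (your sign bookkeeping $\boldsymbol{\gamma}_1=-\boldsymbol{\beta}^*$ is right, and the OLS residuals are exactly $\bar X_s-\hat\mu_t^c$, so the RSS identification with $(t-q-1)\hat\sigma^2(t)$ holds), and \cref{fact:estmProp} and \cref{fact:varEst} give conditional unbiasedness, the conditional variance $(1-\rho^2)\sigma^2\,(\boldsymbol{Y}^\top\boldsymbol{Y})^{-1}_{11}$, and $\EE{\hat\sigma^2(t)\mid\boldsymbol{W}}=(1-\rho^2)\sigma^2$; the block inversion giving $(\boldsymbol{Y}^\top\boldsymbol{Y})^{-1}_{11}=Z_t/t$ matches \cref{equ:varMulti}. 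The one ingredient genuinely beyond what the paper states is the moment $\EE{Z_t}=(t-2)/(t-q-2)$ via the Hotelling $T^2$/$F(q,t-q)$ identity, and your computation of it is correct (note it implicitly requires $t>q+2$, which is exactly why the variance formula carries the $t-q-2$ denominator and why the algorithm warm-starts with $Q=q+2$ pulls). The Student-$t$ pivot step is also handled correctly: conditional independence of the fitted coefficients and $\hat\sigma^2(t)$, the $\chi^2_{t-q-1}$ law of the scaled RSS, and the observation that the conditional $t_{t-q-1}$ law does not depend on $\boldsymbol{W}$ together give the two one-sided bounds (in fact with equality, which is stronger than the stated inequalities). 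In short, the proposal is a correct, self-contained proof of the imported result; the paper's ``proof'' is the citation itself.
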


\section{Missing proofs involving Estimator with One Control Variate}
\label{asec:oneCV}

\varEst*
\begin{proof}
	For simplicity of notation, we are dropping arm index $i$ from the subscript of $W$ in this proof. Using \cref{fact:estmProp}, we know the variance of estimator for arm $i$ with one control variate is
	\eqs{
		\text{Var}(\hat\mu_{s,i}^c) = \sigma_{c,i}^2 (\boldsymbol{Y}^\top\boldsymbol{Y})_{11}^{-1},
	}
	where Var$(\bar{X}_i )= \sigma_{c,i}^2$ and
	$$
	\boldsymbol{Y} = \begin{pmatrix}
		1~~~ &\omega - W_{1}\\
		\vdots &\vdots\\
		1~~~ &\omega - W_{s}
	\end{pmatrix}.
	$$
	Now we compute the value of $(\boldsymbol{Y}^\top\boldsymbol{Y})_{11}^{-1}$. First we get the $\boldsymbol{Y}^\top\boldsymbol{Y}$ as follows:
	\als{
		\boldsymbol{Y}^\top\boldsymbol{Y} = 
		\begin{pmatrix}
			1 ~~~~~~~\ldots ~~~~~~~~ 1\\
			\omega - W_{1} ~~\ldots~~~ \omega - W_{s}
		\end{pmatrix}
		\begin{pmatrix}
			1~~~ &\omega - W_{1}\\
			\vdots &\vdots\\
			1~~~ &\omega - W_{s}
		\end{pmatrix}
		= \begin{pmatrix}
			s &\sum_{r=1}^s (\omega - W_{r})\\\\
			\sum_{r=1}^s (\omega - W_{r}) &\sum_{r=1}^s (\omega - W_{r})^2
		\end{pmatrix}.
	}
	
	Now we the $(\boldsymbol{Y}^\top\boldsymbol{Y})^{-1}$ as follows:
	\als{
		(\boldsymbol{Y}^\top\boldsymbol{Y})^{-1} = \frac{1}{s\sum\limits_{r=1}^s (\omega - W_{r})^2 - \left(\sum\limits_{r=1}^s (\omega - W_{r})\right)^2}
		\begin{pmatrix}
			\sum_{r=1}^s (\omega - W_{r})^2&-\sum_{r=1}^s (\omega - W_{r})\\\\
			-\sum_{r=1}^s (\omega - W_{r}) &s
		\end{pmatrix}.
	}
	
	The value of  $(\boldsymbol{Y}^\top\boldsymbol{Y})_{11}^{-1}$ given as
	\als{
		(\boldsymbol{Y}^\top\boldsymbol{Y})_{11}^{-1} &= \frac{\sum_{r=1}^s (\omega - W_{r})^2}{s\sum_{r=1}^s (\omega - W_{r})^2 - \left(\sum_{r=1}^s (\omega - W_{r})\right)^2} \\
		&=\frac{1}{s}\frac{1}{1- \frac{\left(\sum_{r=1}^s (\omega - W_{r})\right)^2}{s\sum_{r=1}^s (\omega - W_{r})^2} } \\
		&=\frac{1}{s} \left(1- \frac{\left(\sum_{r=1}^s (\omega - W_{r})\right)^2}{s\sum_{r=1}^s (\omega - W_{r})^2} \right)^{-1} \\
		&=\frac{1}{s} \left(1- \frac{\left(\sum_{r=1}^s (W_{r} - \omega)\right)^2}{s\sum_{r=1}^s (W_{r}-\omega )^2} \right)^{-1} \\
		\implies (\boldsymbol{Y}^\top\boldsymbol{Y})_{11}^{-1}  &= \frac{Z_s}{s},
	}
	where $Z_s = \left(1- \frac{\left(\sum_{r=1}^s (W_{r} - \omega)\right)^2}{s\sum_{r=1}^s (W_{r}-\omega )^2} \right)^{-1}$. 
	After $s$ number of observations of rewards and associated control variates from arm $i$, the estimator of Var$(\hat\mu)$ is given by
	\eqs{
		\hat{\nu}_{s,i} =  \frac{Z_{s} \hat\sigma_{c,i}^2(s)}{s}
	} 
	where $\hat\sigma_{c,i}^2(s)= \frac{1}{s-2}\sum_{r=1}^{s} (\bar{X_r} - \hat\mu_{s,i}^c)^2$ is the unbiased estimator of $ \sigma_{c,i}^2$ (by \cref{fact:varEst}). Further, $\EE{\hat{\nu}_{s,i}}  = \text{Var}(\hat\mu_{s,i}^c)$ (by \cref{fact:normalEstProp}) which implies that $\hat\nu_{s,i}$  is an unbiased estimator of $\text{Var}(\hat\mu_{s,i}^c)$.
\end{proof}

Theorem 1 of \cite{OR90_nelson1990control} shows that the empirical mean with control variate is an unbiased estimator for a given number of samples (as stated in \cref{lem:estProp}). Our next result adapts the concentration bound given in Theorem 1 of \cite{OR90_nelson1990control} for a given number of samples.

\confProb*

\begin{proof}
	The proof follows from \cref{fact:normalEstProp} with $q=1$ control variates and replacing other parameters with arms specific parameters for $s$ observations of arm rewards and associated control variate, i.e., $\hat\mu_s^c$ by $\hat\mu_{s,i}^c$, $\mu$ by $\mu_i$, and $\hat\nu_s$ by $\hat\nu_{s,i}$. Note that we use $t-$distribution for confidence intervals, hence the value of $\Prob{|\hat\mu_{s,i}^c - \mu_i| \ge V_{t,s,1}^{(\alpha)}\sqrt{{\hat\nu_{s,i}}}}$ depends only on the value of $V_{t,s,1}^{(\alpha)}$. Now using following simple algebraic manipulations, we have
	\als{
		\Prob{|\hat\mu_{s,i}^c - \mu_i| \ge V_{t,s,1}^{(\alpha)}\sqrt{{\hat\nu_{s,i}}}} &= 1 - \Prob{|\hat\mu_{s,i}^c - \mu_i| \le V_{t,s,1}^{(\alpha)}\sqrt{{\hat\nu_{s,i}}}} \\
		&\le 1 - \left(1-\frac{2}{t^\alpha}\right) & \text{(using \cref{fact:normalEstProp})} \\
		\implies \Prob{|\hat\mu_{s,i}^c - \mu_i| \ge V_{t,s,1}^{(\alpha)}\sqrt{{\hat\nu_{s,i}}}} & \le {2}/{t^\alpha}. \qedhere
	}
\end{proof}

\section{Estimator with Multiple Control Variates}
\label{asec:multiCV}
In this section, we extend our setup to case where the multiple control variates can be associated with an arm's rewards. Let $q$ be the number of control variates. Then the sample for arm $i$ with associated control variates is given by
\eqs{
	\bar{X}_{t,i,q} = X_{t,i} + \sum_{j=1}^q\beta^*_{i,j}(\omega_i - W_{t,i,j}),
}
where $W_{t,i,j}$ is the $j^{\text{th}}$ control variate of arm $i$ that is observed in round $t$, $\omega_{i,j}=\EE{W_{t,i,j}}$, and $\beta^*_{i,j}=\text{Cov}(X_i, W_{i,j})/\text{Var}(W_{i,j})$. The unbiased mean reward estimator for arm $i$ with $s$ samples of rewards and associated $q$ number of control variates with known values of $\beta^*_{i,j}$ is given by
\eqs{
	\hat\mu_{s,i,q}^c =\frac{1}{s} \sum_{r=1}^s \bar{X}_{r,i}.
}

Let $\hat{\mu}_{s,i} = \frac{1}{s} \sum_{r=1}^s X_{r,i}$, $\boldsymbol{\hat\beta}^*_i = \left(\hat\beta_{i,1}, \ldots, \hat\beta_{i,q}\right)^\top$ is the estimate of $\boldsymbol{\beta}_i^*$, $\boldsymbol{\omega}_{i} = \left(\omega_{i,1}, \ldots, \omega_{i,q}\right)^\top$, and $\boldsymbol{\hat\omega}_{s,i}= \left(\hat\omega_{s,i,1}, \ldots, \hat\omega_{s,i,q}\right)^\top$, where $\hat{\omega}_{s,i,j} = \frac{1}{s} \sum_{r=1}^s W_{r,i,j}$.  Then $\hat\mu_{s,i,q}^c$ can be written as:
\eq{
	\label{equ:estMeanReward}
	\hat\mu_{s,i,q}^c = \hat\mu_{s,i} + \boldsymbol{\hat\beta}_{i}^{*\top} (\boldsymbol{\omega}_{i} - \boldsymbol{\hat\omega}_{s,i}),
}

Let $s$ be the number of rewards and associated control variates  samples for arm $i$,  $\boldsymbol{W}_i$ be the $s\times q$ matrix whose $r^{\text{th}}$ row is $\left(W_{r,i,1}, W_{r,i,2}, \ldots, W_{r,i,q} \right)$, $\boldsymbol{S}_{W_iW_i}=(s-1)^{-1}(\boldsymbol{W}_i^\top\boldsymbol{W}_i - s\boldsymbol{\hat\omega}_{s,i}\boldsymbol{\hat\omega}_{s,i}^\top )$, and $\boldsymbol{S}_{X_iW_i}$ $=(s-1)^{-1}(\boldsymbol{W}_i^\top \boldsymbol{X}_i - s{\boldsymbol{\hat\omega}_i}~\hat\mu_{s,i})$ where $\boldsymbol{X}_i=(X_{1,i}, \ldots, X_{s,i})^\top$. Then by extending the arguments used in \cref{equ:estBeta} to get estimated coefficient for a scalar $\hat\beta_i^*$ to a vector $\boldsymbol{\hat\beta}_i^*$, the estimated coefficient vector is given by 
\eq{
	\boldsymbol{\hat\beta}^*_i = \boldsymbol{S}_{W_iW_i}^{-1} \boldsymbol{S}_{X_iW_i}.
}

Our next results are generalization of \cref{lem:varEst} and \cref{lem:confProb} to MAB-CV problems with $q$ control variates.

\begin{restatable}{lem}{varEstMulti}
	\label{lem:varEstMulti}
	Let $s$ be the number of rewards and associated control variate samples. Then, the sample variance of mean reward estimator for arm $i$ is given by
	\eqs{
		\hat\nu_{s,i,q} = \frac{Z_{s,q} \hat\sigma_{c,i,q}^2(s)}{s},
		~~\text{where}~~
		Z_{s,q} = \left(1 + \frac{(\boldsymbol{\hat\omega}_{s,i} - \boldsymbol{\omega}_{i})^\top\boldsymbol{S}_{W_iW_i}^{-1} (\boldsymbol{\hat\omega}_{s,i} - \boldsymbol{\omega}_{i})}{1-1/s}\right)
	} 
	and $ \hat\sigma_{c,i}^2(s)= \frac{1}{s-q-1}\sum_{r=1}^{s} (\bar{X}_r - \hat\mu_{s,i}^c)^2$. Further, $\hat\nu_{s,i,q}$ is  unbiased estimator. 
\end{restatable}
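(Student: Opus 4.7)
The plan is to mirror the scalar argument used for Lemma 1 (varEst), but to carry out the algebra for a general design matrix with $q$ centered covariates. I would set up the problem as a multiple linear regression with intercept: define
\[
\boldsymbol{Y} \;=\; \begin{pmatrix} 1 & \omega_{i,1}-W_{1,i,1} & \cdots & \omega_{i,q}-W_{1,i,q} \\ \vdots & \vdots & & \vdots \\ 1 & \omega_{i,1}-W_{s,i,1} & \cdots & \omega_{i,q}-W_{s,i,q} \end{pmatrix},
\]
so that under the multivariate normal assumption the samples $\bar X_{r,i}$ satisfy a linear model with parameter vector $(\mu_i,\boldsymbol{\beta}_i^{*\top})^\top$ and homoscedastic noise of variance $\sigma_{c,i,q}^2$. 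Then $\hat\mu^c_{s,i,q}$ is the first coordinate of the OLS estimator, so by Fact 1 (estmProp) we have $\text{Var}(\hat\mu^c_{s,i,q}\mid \boldsymbol{W}_i)=\sigma_{c,i,q}^2\,(\boldsymbol{Y}^\top\boldsymbol{Y})^{-1}_{11}$, and the whole lemma reduces to identifying $(\boldsymbol{Y}^\top\boldsymbol{Y})^{-1}_{11}$.

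Next I would compute $\boldsymbol{Y}^\top\boldsymbol{Y}$ in block form. The upper-left entry is $s$, the upper-right block is $s(\boldsymbol{\omega}_i-\boldsymbol{\hat\omega}_{s,i})^\top$, and the lower-right block is $\sum_{r=1}^s (\boldsymbol{W}_{r,i}-\boldsymbol{\omega}_i)(\boldsymbol{W}_{r,i}-\boldsymbol{\omega}_i)^\top$, which by the standard centering identity equals $(s-1)\boldsymbol{S}_{W_iW_i}+s(\boldsymbol{\hat\omega}_{s,i}-\boldsymbol{\omega}_i)(\boldsymbol{\hat\omega}_{s,i}-\boldsymbol{\omega}_i)^\top$. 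I would then apply the Schur-complement formula for the $(1,1)$ block of the inverse:
\[
(\boldsymbol{Y}^\top\boldsymbol{Y})^{-1}_{11} \;=\; \bigl(s - s^2(\boldsymbol{\omega}_i-\boldsymbol{\hat\omega}_{s,i})^\top\,M^{-1}\,(\boldsymbol{\omega}_i-\boldsymbol{\hat\omega}_{s,i})\bigr)^{-1},
\]
where $M$ is the lower-right block above. The Sherman--Morrison formula applied to $M$ collapses the quadratic form into one involving $\boldsymbol{S}_{W_iW_i}^{-1}$ alone, and after routine simplification one obtains exactly
\[
(\boldsymbol{Y}^\top\boldsymbol{Y})^{-1}_{11} \;=\; \frac{1}{s}\left(1+\frac{(\boldsymbol{\hat\omega}_{s,i}-\boldsymbol{\omega}_i)^\top \boldsymbol{S}_{W_iW_i}^{-1}(\boldsymbol{\hat\omega}_{s,i}-\boldsymbol{\omega}_i)}{1-1/s}\right) \;=\; \frac{Z_{s,q}}{s}.
\]

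To finish the unbiasedness claim, I would note that because the regression has $q+1$ parameters, Fact 2 (varEst) gives $\mathbb{E}[\hat\sigma_{c,i,q}^2(s)\mid \boldsymbol{W}_i]=\sigma_{c,i,q}^2$ with the correct $s-q-1$ normalization. Since $Z_{s,q}$ is $\boldsymbol{W}_i$-measurable and the residual variance estimator is conditionally unbiased given $\boldsymbol{W}_i$, taking expectation conditionally and then using Fact 3 (normalEstProp) to identify $\mathbb{E}[Z_{s,q}\sigma_{c,i,q}^2/s]$ with the unconditional variance $\text{Var}(\hat\mu^c_{s,i,q})$ yields $\mathbb{E}[\hat\nu_{s,i,q}]=\text{Var}(\hat\mu^c_{s,i,q})$.

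The main obstacle I anticipate is the block matrix inversion step: writing $\boldsymbol{Y}^\top\boldsymbol{Y}$ in a form whose $(1,1)$ entry of the inverse cleanly produces the sample covariance $\boldsymbol{S}_{W_iW_i}$ (rather than the uncentered second-moment matrix $\boldsymbol{W}_i^\top\boldsymbol{W}_i$) requires one careful application of Sherman--Morrison together with the centering identity, and getting the factor $1-1/s$ in the denominator of $Z_{s,q}$ correct is where sign and normalization errors are easiest to make. Once that algebraic reduction is in place, the rest of the proof is a direct invocation of the regression facts stated in the appendix.
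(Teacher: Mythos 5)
Your proposal is correct and follows essentially the same route as the paper: both cast $\hat\mu^c_{s,i,q}$ as the intercept of an OLS regression on the centered control variates, use Fact~\ref{fact:estmProp} to write $\text{Var}(\hat\mu^c_{s,i,q}\mid\boldsymbol{W}_i)=\sigma^2_{c,i,q}\,(\boldsymbol{Y}^\top\boldsymbol{Y})^{-1}_{11}$, and get unbiasedness from Fact~\ref{fact:varEst} together with Fact~\ref{fact:normalEstProp}. The only difference is that you derive $(\boldsymbol{Y}^\top\boldsymbol{Y})^{-1}_{11}=Z_{s,q}/s$ explicitly via the Schur complement and Sherman--Morrison (and your algebra, including the $1-1/s$ factor, checks out), whereas the paper imports this identity directly from \citet{OR90_nelson1990control}.
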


\begin{proof}
	Using \cref{fact:estmProp}, we know the variance of estimator for arm $i$ with one control variate is
	\eqs{
		\text{Var}(\hat\mu_{s,i}^c) = \sigma_{c,i}^2 (\boldsymbol{Y}^\top\boldsymbol{Y})_{11}^{-1}.
	}
	
	After having $s$ observations of rewards and associated $q$ control variates from arm $i$ , we use \cref{equ:varMulti} to get an unbiased estimator for $\text{Var}(\hat\mu_{s,i}^c)$ as follows:
	\eq{
		\hat\nu_{s,i,q} =  \frac{Z_{s,q} \hat\sigma_{c,i,q}^2(s)}{s}
	} 
	where $Z_{s,q} = \left(1 + \frac{(\boldsymbol{\hat\omega}_{s,i} - \boldsymbol{\omega}_i)^\top\boldsymbol{S}_{W_iW_i}^{-1} (\boldsymbol{\hat\omega}_{s,i} - \boldsymbol{\omega}_i)}{1-1/s}\right)$ and $ \hat\sigma_{c,i,q}^2(s)= \frac{1}{s-q-1}\sum_{r=1}^{s} (\bar{X}_r - \hat\mu_{s,i}^c)^2$ \citep{OR90_nelson1990control}. Also note that $\hat\sigma_{c,i,q}^2(t)$ is an unbiased estimator of Var$(\bar{X}_i)$ (by \cref{fact:varEst}).
\end{proof}

\begin{restatable}{lem}{confProbMulti}
	\label{lem:confProbMulti}
	Let $s$ be the number of rewards and associated control variates samples from arm $i$ in round $t$. Then
	\eqs{
		\Prob{|\hat\mu_{s,i,q}^c - \mu_i| \ge V_{t,s,q}^{(\alpha)}\sqrt{{\hat\nu_{s,i,q}}}} \le 2/t^\alpha.
	}
	where $V_{t,s,q}^{(\alpha)}$ is the $100(1-1/t^\alpha)^{\text{th}}$ percentile value of the $t-$distribution with $s-q-1$ degrees of freedom and $\hat\nu_{s,i,q}$ is an unbiased estimator for variance of $\hat\mu_{s,i,q}^c$.
\end{restatable}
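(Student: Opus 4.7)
The claim is a two-sided tail bound, so the plan is to obtain it by combining the two one-sided bounds already available from control-variates theory and then applying a union bound. Specifically, since Lemma \ref{lem:varEstMulti} establishes that $\hat{\nu}_{s,i,q}$ is an unbiased estimator of $\mathrm{Var}(\hat{\mu}_{s,i,q}^c)$ in the multivariate-normal setting, and since $\hat{\mu}_{s,i,q}^c$ is itself unbiased with the appropriate $t$-distributed studentization, I can invoke Fact \ref{fact:normalEstProp} directly, which already states the required one-sided concentration in the multiple-control-variate case.

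Concretely, I would first instantiate Fact \ref{fact:normalEstProp} at arm $i$ with $s$ samples and $q$ control variates, replacing the generic quantities ($\mu$, $\hat{\mu}_t^c$, $\hat{\nu}_t$) by the arm-specific ones ($\mu_i$, $\hat{\mu}_{s,i,q}^c$, $\hat{\nu}_{s,i,q}$), to obtain
\begin{align*}
\Prob{\hat{\mu}_{s,i,q}^c - V_{t,s,q}^{(\alpha)}\sqrt{\hat{\nu}_{s,i,q}} \le \mu_i} &\ge 1 - 1/t^\alpha, \\
\Prob{\hat{\mu}_{s,i,q}^c + V_{t,s,q}^{(\alpha)}\sqrt{\hat{\nu}_{s,i,q}} \ge \mu_i} &\ge 1 - 1/t^\alpha.
\end{align*}
Taking complements gives the corresponding one-sided deviation probabilities, each bounded by $1/t^\alpha$. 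A union bound over the events $\{\hat{\mu}_{s,i,q}^c - \mu_i \ge V_{t,s,q}^{(\alpha)}\sqrt{\hat{\nu}_{s,i,q}}\}$ and $\{\mu_i - \hat{\mu}_{s,i,q}^c \ge V_{t,s,q}^{(\alpha)}\sqrt{\hat{\nu}_{s,i,q}}\}$ then yields the claimed $2/t^\alpha$.

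The only subtle point is to verify that $V_{t,s,q}^{(\alpha)}$ uses the correct degrees of freedom. Under the multivariate normal assumption, the studentized statistic $(\hat{\mu}_{s,i,q}^c - \mu_i)/\sqrt{\hat{\nu}_{s,i,q}}$ follows a $t$-distribution with $s-q-1$ degrees of freedom (this is where the factor $(s-q-1)^{-1}$ in $\hat{\sigma}_{c,i,q}^2(s)$ comes from, mirroring the regression picture in Appendix A with $p = q+1$ parameters). Hence taking $V_{t,s,q}^{(\alpha)}$ as the $100(1-1/t^\alpha)$-th percentile of that $t$-distribution ensures each one-sided tail is exactly $1/t^\alpha$, and the proof reduces to a direct quotation of Fact \ref{fact:normalEstProp} followed by the union bound, exactly paralleling the proof of Lemma \ref{lem:confProb}. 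The main thing to be careful about is just book-keeping of indices and degrees of freedom; no new inequality or probabilistic machinery is required beyond what is already stated in Appendix A.
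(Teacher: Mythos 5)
Your proposal is correct and follows essentially the same route as the paper: both instantiate Fact \ref{fact:normalEstProp} with the arm-specific quantities $(\mu_i,\hat\mu_{s,i,q}^c,\hat\nu_{s,i,q})$ and combine the two one-sided $1/t^\alpha$ tails (the paper phrases this as complementing the two-sided event, which is the same union-bound step you use), with the $t$-distribution on $s-q-1$ degrees of freedom handled identically.
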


\begin{proof}
	The proof follows from \cref{fact:normalEstProp} with $q$ control variates and replacing other parameters with arms specific parameters for $s$ observations of arm rewards and associated control variate, i.e., $\hat\mu_s^c$ by $\hat\mu_{s,i}^c$, $\mu$ by $\mu_i$, and $\hat\nu_s$ by $\hat\nu_{s,i}$. As we use $t-$distribution for confidence intervals, the value of $\Prob{|\hat\mu_{t,i,q}^c - \mu_i| \ge V_{t,s,q}^{(\alpha)}\sqrt{{\hat\nu_{s,q}}}}$ depends only on the value of $V_{t,s,q}^{(\alpha)}$. Now using following simple algebraic manipulations, we have
	\als{
		\Prob{|\hat\mu_{s,i,q}^c - \mu_i| \ge V_{t,s,q}^{(\alpha)}\sqrt{{\hat\nu_{s,i,q}}}} &= 1 - \Prob{|\hat\mu_{s,i,q}^c - \mu_i| \le V_{t,s,q}^{(\alpha)}\sqrt{{\hat\nu_{s,i,q}}}} \\
		&\le1 - \left(1-\frac{2}{t^\alpha}\right) &\hspace{-5mm} \text{(using \cref{fact:normalEstProp})} \\
		\implies \Prob{|\hat\mu_{s,i,q}^c - \mu_i| \ge V_{t,s,q}^{(\alpha)}\sqrt{{\hat\nu_{s,i,q}}}} &\le{2}/{t^\alpha}. \hfill ~ \qedhere
	}
\end{proof}

Next, we define optimistic upper bound for estimate of mean reward with $q$ control variates as follows: 
\begin{equation}
	\label{equ:UCB_Multi}
	\text{UCB}_{t,i,q} = \hat\mu_{N_i(t),i,q}^c + V_{t,N_i(t),q}^{(\alpha)}\sqrt{{\hat\nu_{N_i(t),i,q}}}.
\end{equation}

For multiple control variate case, we can use \ref{alg:UCB-CV} with $Q=q+2$ and replacing UCB$_{t,i}$ by UCB$_{t,i,q}$ as defined in \cref{equ:UCB_Multi}.

\section{Regret Analysis of \ref{alg:UCB-CV}}
\label{asec:analysis}

Similar to \cite{NOW12_bubeck2012regret}, our following result defines three events and shows that the sub-optimal arm is only selected if one of these events are true.
\begin{lem}
	\label{lem:armSelection}
	Let $N_i(t)$ be the number of times sub-optimal arm selected until $t$ rounds. A sub-optimal arm $i$ is selected by \ref{alg:UCB-CV} in round $t$ if at least one of the three following events must be true:	
	\als{
		&1.~ \hat\mu_{N_i^\star(t),i^\star,q}^c + V_{t,N_i^\star(t),q}^{(\alpha)}\sqrt{{\hat\nu_{N_i^\star(t),i^\star,q}}}  \le \mu_{i^\star},\\
		&2.~ \hat\mu_{N_i(t),i,q}^c - V_{t,N_i(t),q}^{(\alpha)}\sqrt{{\hat\nu_{N_i(t),i,q}}} > \mu_i, \text{ and}\\
		&3.~ N_i(t) < \frac{4(V_{T,N_i(t),q}^{(\alpha)})^2 Z_{N_i(t),q} \hat\sigma_{c,i,q}^2(N_i(t))}{\Delta_i^2}.
	}
\end{lem}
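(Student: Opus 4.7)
The plan is to prove the contrapositive: assume all three events fail, and show that arm $i$ cannot be selected by \ref{alg:UCB-CV} in round $t$. Since the algorithm selects the arm with the largest UCB index, it suffices to show that under the negations of the three events, $\text{UCB}_{t,i,q} < \text{UCB}_{t,i^\star,q}$.

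First, negating event (1) gives $\text{UCB}_{t,i^\star,q} = \hat\mu_{N_{i^\star}(t),i^\star,q}^c + V_{t,N_{i^\star}(t),q}^{(\alpha)}\sqrt{\hat\nu_{N_{i^\star}(t),i^\star,q}} > \mu_{i^\star}$, which lower-bounds the optimal arm's index by its true mean. Next, negating event (2) yields $\hat\mu_{N_i(t),i,q}^c \le \mu_i + V_{t,N_i(t),q}^{(\alpha)}\sqrt{\hat\nu_{N_i(t),i,q}}$, so that
\begin{equation*}
\text{UCB}_{t,i,q} \le \mu_i + 2\,V_{t,N_i(t),q}^{(\alpha)}\sqrt{\hat\nu_{N_i(t),i,q}}.
\end{equation*}

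Finally, negating event (3) gives $N_i(t) \ge 4(V_{T,N_i(t),q}^{(\alpha)})^2 Z_{N_i(t),q}\hat\sigma_{c,i,q}^2(N_i(t))/\Delta_i^2$. Using the form $\hat\nu_{N_i(t),i,q} = Z_{N_i(t),q}\hat\sigma_{c,i,q}^2(N_i(t))/N_i(t)$ from \cref{lem:varEstMulti}, this rearranges to $\sqrt{\hat\nu_{N_i(t),i,q}} \le \Delta_i/(2\,V_{T,N_i(t),q}^{(\alpha)})$. Because the percentile $V_{t,s,q}^{(\alpha)}$ is monotone non-decreasing in $t$ and $t \le T$, we have $V_{t,N_i(t),q}^{(\alpha)} \le V_{T,N_i(t),q}^{(\alpha)}$, so $2\,V_{t,N_i(t),q}^{(\alpha)}\sqrt{\hat\nu_{N_i(t),i,q}} \le \Delta_i$. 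Combining with the previous display,
\begin{equation*}
\text{UCB}_{t,i,q} \le \mu_i + \Delta_i = \mu_{i^\star} < \text{UCB}_{t,i^\star,q},
\end{equation*}
which contradicts the fact that \ref{alg:UCB-CV} played arm $i$ in round $t$. Hence at least one of events (1)--(3) must hold whenever a sub-optimal arm $i$ is selected.

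The argument is a direct adaptation of the standard UCB decomposition (as in \citep{NOW12_bubeck2012regret,ML02_auer2002finite}); the only subtle point is the $T$ versus $t$ mismatch between the threshold in event (3) and the confidence radius appearing in the UCB index, which is harmless because of the monotonicity of the $t$-distribution quantile $V_{\cdot,s,q}^{(\alpha)}$ in its first argument. No other step requires any nontrivial manipulation beyond the definitions of the index and the variance estimator.
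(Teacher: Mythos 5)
Your proof is correct and follows essentially the same route as the paper: a contrapositive argument assuming all three events fail, using the negation of event (3) together with $\hat\nu_{N_i(t),i,q} = Z_{N_i(t),q}\hat\sigma_{c,i,q}^2(N_i(t))/N_i(t)$ and the monotonicity of $V_{\cdot,s,q}^{(\alpha)}$ in its first argument to conclude $\text{UCB}_{t,i} \le \mu_{i^\star} < \text{UCB}_{t,i^\star}$. The only difference is cosmetic: you bound the sub-optimal arm's index from above, while the paper chains the same inequalities into a lower bound on the optimal arm's index.
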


\begin{proof}
	Let assume that all three events are all false, then we have:
	\als{
		\text{UCB}_{t, i^\star} & = \hat\mu_{N_i^\star(t),i^\star,q}^c + V_{t,N_i^\star(t),q}^{(\alpha)}\sqrt{{\hat\nu_{N_i^\star(t),i^\star,q}}}  \\
		& > \mu_{i^\star} & (\text{if event 1 is false})\\
		& = \mu_{i} + \Delta_i \\
		& \ge \mu_{i} + 2V_{T,N_i(t),q}^{(\alpha)}\sqrt{\frac{Z_{N_i(t),q} \hat\sigma_{c,i,q}^2(N_i(t))}{N_i(t)}}. & (\text{if event 3 is false}) \\
		\intertext{From \cref{lem:varEstMulti}, we have $\hat\nu_{N_i(t),i,q} = {Z_{N_i(t),q} \hat\sigma_{c,i,q}^2(N_i(t))}/{N_i(t)}$}
		\text{UCB}_{t, i^\star} &> \mu_{i} + 2V_{T,N_i(t),q}^{(\alpha)}\sqrt{{\hat\nu_{N_i(t),i,q}}} \\
		& \ge \mu_{i} + 2V_{t,N_i(t),q}^{(\alpha)}\sqrt{{\hat\nu_{N_i(t),i,q}}} &\hspace{-4cm} \left(\text{$V_{t,N_i(t),q}^{(\alpha)}$ is increasing function of $t$ and $t\le T$}\right) \\
		& \ge \hat\mu_{N_i(t),i,q}^c - V_{t,N_i(t),q}^{(\alpha)}\sqrt{{\hat\nu_{N_i(t),i,q}}} + 2V_{t,N_i(t),q}^{(\alpha)}\sqrt{{\hat\nu_{N_i(t),i,q}}} & (\text{if event 2 is false}) \\
		& = \hat\mu_{N_i(t),i,q}^c + V_{t,N_i(t),q}^{(\alpha)}\sqrt{{\hat\nu_{N_i(t),i,q}}} \\
		& = \text{UCB}_{t,i} \\
		\implies \text{UCB}_{t, i^\star} &> \text{UCB}_{t,i}.
	}
	If all three events are false, then the optimal arm's UCB value is larger than the UCB value of the sub-optimal arm $i$. Therefore, at least one of three events need to be true if the sub-optimal arm $i$ is selected in round $t$. It concludes our proof.
\end{proof}

When the number of samples is random, randomness has to be taken into account. The following result gives the concentration bound on empirical mean with the random number of observations.

\begin{lem}
	\label{lem:randomBound}
	Let $t \in \N$, $\mu$ be the true mean, $\hat{\mu}_s$ be the empirical mean with $s (\le t)$ observations, and $f(t,s)=V_{t,s,q}^{(2)}\sqrt{{\hat\nu_{s,i,q}}}$. If $\Prob{|\hat{\mu}_s - \mu| \le f(t,s)} \le 1/t^2$ and $N_t$ is the random number such that $N_t \in [t]$, then
	$$\Prob{|\hat{\mu}_{N_t} - \mu| \le f(t,N_t)} \le \frac{1}{t^2}.$$
\end{lem}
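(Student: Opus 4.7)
The plan is to apply the law of total probability by partitioning on the value of $N_t$, which lies in the finite set $[t]$. On each event $\{N_t = s\}$ the random index is fixed to $s$, so both $\hat\mu_{N_t}$ and $f(t,N_t)$ collapse to the deterministic-index quantities $\hat\mu_s$ and $f(t,s)$, enabling the per-$s$ hypothesis to be invoked. (I read the inequalities inside the probabilities in the statement as typos: the hypothesis and conclusion should have $\ge$ rather than $\le$ inside $\Prob{\cdot}$, since otherwise the claim is vacuous.)

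Concretely, I would first write
\begin{align*}
\Prob{|\hat\mu_{N_t} - \mu| \ge f(t,N_t)} = \sum_{s=1}^{t} \Prob{|\hat\mu_{N_t} - \mu| \ge f(t,N_t),\, N_t = s},
\end{align*}
then use the event identity
$\{|\hat\mu_{N_t} - \mu| \ge f(t,N_t)\} \cap \{N_t = s\} = \{|\hat\mu_s - \mu| \ge f(t,s)\} \cap \{N_t = s\}$
to rewrite each summand as $\Prob{|\hat\mu_s - \mu| \ge f(t,s),\, N_t = s}$. Dropping the conjunct $\{N_t = s\}$ upper bounds each term by the marginal $\Prob{|\hat\mu_s - \mu| \ge f(t,s)} \le 1/t^2$, and summing the $t$ terms gives a bound of at most $t \cdot 1/t^2 = 1/t$.

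The main subtlety is the possible statistical coupling between $N_t$ and $\hat\mu_s$: since $N_t$ may be a function of the same samples determining $\hat\mu_s$, we cannot treat $s$ as independent of the concentration event, so a direct substitution $s \leftarrow N_t$ into the marginal bound is not legitimate. Discarding the conjunct $\{N_t = s\}$ before applying the marginal bound side-steps this cleanly, at the cost of a union-bound factor of $t$. That factor is in fact the main obstacle to matching the statement as written: summing $t$ copies of $1/t^2$ yields $1/t$, not $1/t^2$, so either the conclusion is off by a factor of $t$ or the hypothesis should be strengthened to $1/t^3$. For the downstream regret analysis only a summable-in-$t$ tail is needed, and the $1/t$ bound obtained here is consistent with invoking the lemma with a sufficiently large $\alpha$ in the underlying confidence bound of \cref{lem:confProbMulti}.
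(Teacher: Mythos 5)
You read the statement correctly (the inequalities inside the probabilities are indeed flipped; the event being controlled is the deviation event, as is clear from how the lemma is invoked in \cref{lem:expectedPulls}), and your decomposition over $\{N_t=s\}$ is the same starting point the paper uses. The difference is in how the per-$s$ hypothesis is applied. The paper writes $\Prob{|\hat\mu_{N_t}-\mu|\ge f(t,N_t)}=\sum_{s=1}^t \Prob{|\hat\mu_s-\mu|\ge f(t,s)\mid N_t=s}\,\Prob{N_t=s}$, bounds each \emph{conditional} probability by the \emph{marginal} bound $1/t^2$, and keeps the weights $\Prob{N_t=s}$, which sum to one and give the claimed $1/t^2$. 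That substitution of the marginal bound for the conditional one is precisely the step you flag as illegitimate: in the bandit application $N_t$ is the (random) number of plays of an arm under UCB-CV, which is determined by the very reward/CV samples entering $\hat\mu_s$ and $\hat\nu_{s,i,q}$, so $\Prob{|\hat\mu_s-\mu|\ge f(t,s)\mid N_t=s}$ need not be controlled by $\Prob{|\hat\mu_s-\mu|\ge f(t,s)}$ without an independence or stopping-time argument that the paper does not supply. Your version, dropping the conjunct $\{N_t=s\}$ and union-bounding, is the rigorous way to use only the marginal hypothesis, and it unavoidably pays the factor $t$, giving $1/t$ rather than $1/t^2$.

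So the gap is real, but it is a gap in the lemma as stated (and in the paper's proof of it), not in your reasoning: with only the marginal per-$s$ bound, the conclusion $1/t^2$ is not justified, and one must either accept the $1/t$ tail (still summable enough for a logarithmic regret bound after adjusting constants, as you note, e.g.\ by taking a larger $\alpha$ in \cref{lem:confProbMulti} so the per-$s$ bound is $1/t^{\alpha}$ with $\alpha\ge 3$), or strengthen the hypothesis to a uniform-over-$s$ (``for all $s\le t$'') deviation bound, or add an argument showing the conditional probability given $N_t=s$ obeys the same bound. Your proposal is therefore not ``the same proof'': it is the corrected union-bound variant, and the trade-off you describe — rigor at the price of a factor $t$ — is exactly what separates it from the paper's argument.
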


\begin{proof}
	Using the conditioning argument, we have
	\als{
		\Prob{|\hat{\mu}_{N_t} - \mu| \le f(t,N_t)} &= \sum_{s=1}^t \Prob{|\hat{\mu}_{s} - \mu| \le f(t,s)| N_t = s} \Prob{N_t = s} \\
		 &\le \sum_{s=1}^t \frac{1}{t^2} ~\Prob{N_t = s} &\hspace{-3cm} \left(\text{as $\Prob{|\hat{\mu}_s - \mu| \le f(t,s)} \le 1/t^2$} \right)
		 \\
		 &=\frac{1}{t^2} \sum_{s=1}^t  \Prob{N_t = s}. 
	}
	As $N_t \in [t]$, $\sum_{s=1}^t  \Prob{N_t = s}   = 1$, we have $\Prob{|\hat{\mu}_{N_t} - \mu| \le f(t,N_t)} \le \frac{1}{t^2}$.
\end{proof}

\estProp*
\begin{proof}
	From \cref{fact:normalEstProp}, we know that the estimator  $(\hat\mu_{s,i,q}^c)$ and its variance estimator $(\hat\nu_{s,i,q})$ are unbiased estimator. Further, using \cref{fact:normalEstProp}, we can get the expression mentioned in \cref{lem:estProp} for the variance of  reward estimator $(\hat\nu_{s,i,q})$.
\end{proof}
 
Using \cref{lem:estProp}, now we can upper bound the number of times a sub-optimal arm $i$ is selected by \ref{alg:UCB-CV} in the following result.
\begin{restatable}{lem}{expectedPulls}
	\label{lem:expectedPulls}
	Let $\alpha=2$, $q$ be the number of control variates, and $N_i(T)$ be the number of times sub-optimal arm $i$ selected in $T$ rounds. If $C_{T,i,q} = \EE{\frac{N_i(T) - 2}{N_i(T)-q-2} \left(\frac{V_{T,N_i(T),q}^{(2)}}{V_{T,T,q}^{(2)}}\right)^2}$ then the expected number of times a sub-optimal arm $i \in [K]$ selected by \ref{alg:UCB-CV} in $T$ rounds is upper bounded by
	\als{
		\EE{N_i(T)} \le \frac{4(V_{T, T,q}^{(2)})^2 C_{T,i,q}(1-\rho_i)\sigma_i^2}{\Delta_i^2} + \frac{\pi^2}{3} + 1.
	}
\end{restatable}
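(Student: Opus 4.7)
The plan is to apply the standard UCB peeling analysis, using Lemma \ref{lem:armSelection} to decompose each pull of sub-optimal arm $i$ into one of three event types and then bound each contribution separately. Concretely, I would start from
\begin{equation*}
N_i(T) \le 1 + \sum_{t=QK+1}^{T} \Bigl[\mathds{1}\{E_1(t)\} + \mathds{1}\{E_2(t)\} + \mathds{1}\{I_t = i,\, E_3(t)\}\Bigr],
\end{equation*}
where $E_1, E_2, E_3$ are the events from Lemma \ref{lem:armSelection} and the leading ``$1$'' absorbs a boundary term from the initialization phase. The three indicator sums are then handled in turn.

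For the first two terms, each event corresponds to a one-sided deviation of the control-variate mean estimator from its true mean by more than the confidence width $V_{t,\cdot,q}^{(2)} \sqrt{\hat\nu_{\cdot, \cdot, q}}$. Lemma \ref{lem:confProbMulti} with $\alpha = 2$ bounds the probability of each such deviation by $1/t^2$ for a fixed sample count, and Lemma \ref{lem:randomBound} transports the bound to the random count $N_{\cdot}(t)$. Summing and using $\sum_{t \ge 1} 1/t^2 \le \pi^2/6$ gives a total contribution of at most $\pi^2/3$.

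The third sum is the heart of the proof. Because $N_i(t)$ increments by exactly one at each pull of $i$, the count of pulls with $I_t = i$ and $E_3(t)$ both true is bounded by the largest $N_i$-value $M$ reached on such a pull, and at that pull the event $E_3$ forces
\begin{equation*}
M < \frac{4\,(V_{T,M,q}^{(2)})^2\, Z_{M,q}\, \hat\sigma_{c,i,q}^2(M)}{\Delta_i^2}.
\end{equation*}
Taking expectations and conditioning on $M$, Lemma \ref{lem:estProp} yields $\EE{Z_{M,q}\,\hat\sigma_{c,i,q}^2(M) \mid M} = \tfrac{M-2}{M-q-2}(1-\rho_i^2)\sigma_i^2$ (via $\EE{\hat\nu_{s,i,q}} = \mathrm{Var}(\hat\mu_{s,i,q}^c)$ and the closed form from Lemma \ref{lem:estProp}). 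Factoring $(V_{T,T,q}^{(2)})^2$ out of the remaining expectation converts it into the functional defining $C_{T,i,q}$, giving a bound of $4(V_{T,T,q}^{(2)})^2 C_{T,i,q}(1-\rho_i^2)\sigma_i^2/\Delta_i^2$ on this contribution. Combining the three pieces yields the claim.

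The main obstacle is event $E_3$: its defining inequality involves $\hat\sigma_{c,i,q}^2(N_i(t))$ at the very count being bounded, so it is self-referential and cannot be inverted for a deterministic threshold. I would sidestep this by passing through the terminal pull index $M$ and invoking Lemma \ref{lem:estProp} to evaluate the conditional expectation exactly. The transition from the bound at $M \le N_i(T)$ to $C_{T,i,q}$ (stated in terms of $N_i(T)$) relies on the $Q = q+2$ initialization ensuring $\hat\sigma_{c,i,q}^2$ is well-defined throughout and on controlling the behavior of $(V_{T,s,q}^{(2)})^2 \,\tfrac{s-2}{s-q-2}$ as a function of $s$; this is the delicate point where the $V^{(2)}$ quantities, lacking closed form, must be handled through general tail facts about the $t$-distribution.
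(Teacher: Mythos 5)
Your proposal follows essentially the same route as the paper's proof: the same decomposition of pulls via Lemma \ref{lem:armSelection}, the same $\pi^2/3$ contribution from the first two events via Lemma \ref{lem:confProbMulti} and Lemma \ref{lem:randomBound}, and the same handling of the third event by taking the expectation of the threshold, conditioning on the pull count, and invoking Lemma \ref{lem:estProp} to obtain the factor $\frac{s-2}{s-q-2}(1-\rho_i^2)\sigma_i^2$ and hence $C_{T,i,q}$ after pulling out $(V_{T,T,q}^{(2)})^2$. The only difference is cosmetic: you pass through the terminal pull count $M$ and explicitly flag the identification of $M$ with $N_i(T)$ as the delicate point, whereas the paper defines $u = \left\lceil 4(V_{T,N_i(T),q}^{(2)})^2 Z_{N_i(T),q}\hat\sigma_{c,i,q}^2(N_i(T))/\Delta_i^2\right\rceil$ directly in terms of $N_i(T)$ and bounds $\EE{u}$, glossing over the same subtlety.
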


\begin{proof}
	Let $u=\ceil{\frac{4(V_{T,N_i(T),q}^{(2)})^2 Z_{N_i(T),q} \hat\sigma_{c,i,q}^2(N_i(T))}{\Delta_i^2}}$ then the Event 3 is false for $N_i(T)=u$. Further the value of $u$ is random as its depends on $V_{T,N_i(T),q}^{(2)}$, $Z_{N_i(T),q}$, and $\hat\sigma_{c,i,q}^2(N_i(T)$. Now using \cref{lem:armSelection}, we have
	\al{
		\EE{N_i(T)}	& = \EE{\sum_{t=1}^T \one{I_t = i}} \nonumber \\
		& \le \EE{\sum_{t=1}^T \one{\text{Event 3 is true}}} + \EE{\sum_{t=1}^T \one{I_t = i \text{ and Event 3 is false}}} \nonumber \\
		& \le \EE{u} + \EE{\sum_{t=1}^T \one{\text{Either Event 1 or Event 2 is true}}} \nonumber \\
		\implies \EE{N_i(T)} & \le \EE{u} + \EE{\sum_{t=1}^T \one{\text{Event 1 is true}}} +  \EE{\sum_{t=1}^T \one{\text{Event 2 is true}}}.  \label{equ:expectedPull} 
	}
	
	We will first bound $\EE{\sum_{t=1}^T \one{\text{Event 1 is true}}}$ as follows
	\al{
		\EE{\sum_{t=1}^T \one{\text{Event 1 is true}}} &= \sum_{t=1}^T \Prob{\text{Event 1 is true}} \nonumber \\
		&= \sum_{t=1}^T \Prob{\hat\mu_{N_i^\star(t),i^\star,q}^c + V_{t,N_i(t),q}^{(2)}\sqrt{{\hat\nu_{N_i^\star(t),i^\star,q}}}  \le \mu_{i^\star}} \nonumber \\
		&= \sum_{t=1}^T \Prob{\hat\mu_{N_i^\star(t),i^\star,q}^c - \mu_{i^\star} \le - V_{t,N_i(t),q}^{(2)}\sqrt{{\hat\nu_{N_i^\star(t),i^\star,q}}}} \nonumber \\
		&= \sum_{t=1}^T \frac{1}{t^2} &\hspace{-3cm} (\text{using \cref{lem:confProbMulti} and \cref{lem:randomBound}}) \nonumber \\
		& \le \sum_{t=1}^\infty \frac{1}{t^2} \nonumber \\
		\implies \EE{\sum_{t=1}^T \one{\text{Event 1 is true}}} & \le \frac{\pi^2}{6}. \label{equ:expE1bound}
	}
	
	Now we will bound $\EE{\sum_{t=1}^T \one{\text{Event 2 is true}}}$ as follows
	\al{
		\EE{\sum_{t=1}^T \one{\text{Event 2 is true}}} &= \sum_{t=1}^T \Prob{\text{Event 2 is true}} \nonumber \\
		&= \sum_{t=1}^T \Prob{\hat\mu_{N_i(t),i,q}^c - V_{t,N_i(t),q}^{(2)}\sqrt{{\hat\nu_{N_i(t),i,q}}} > \mu_i} \nonumber \\
		&= \sum_{t=1}^T \Prob{\hat\mu_{N_i(t),i,q}^c - \mu_i > V_{t,N_i(t),q}^{(2)}\sqrt{{\hat\nu_{N_i(t),i,q}}}} \nonumber \\
		&= \sum_{t=1}^T \frac{1}{t^2} &\hspace{-3cm} (\text{using \cref{lem:confProbMulti} and \cref{lem:randomBound}}) \nonumber \\
		& \le \sum_{t=1}^\infty \frac{1}{t^2} \nonumber \\
		\implies \EE{\sum_{t=1}^T \one{\text{Event 1 is true}}} & \le \frac{\pi^2}{6}. \label{equ:expE2bound}
	}
	
	Last, we will bound $\EE{u}$ as follows
	\als{
		\EE{u} &= \EE{\ceil{\frac{4(V_{T,N_i(T),q}^{(2)})^2 Z_{N_i(T),q} \hat\sigma_{c,i,q}^2(N_i(T))}{\Delta_i^2}}}\\
		&\le \EE{\frac{4(V_{T,N_i(T),q}^{(2)})^2 Z_{N_i(T),q} \hat\sigma_{c,i,q}^2(N_i(T))}{\Delta_i^2}} + 1 \\
		& = \frac{4}{\Delta_i^2}\EE{(V_{T,N_i(T),q}^{(2)})^2Z_{N_i(T),q} \hat\sigma_{c,i,q}^2(N_i(T))} + 1.
	}
	Divide and multiply LHS by $(V_{T,T,q}^{(2)})^2$ and $N_i(T)$, we have
	\als{
		\EE{u} & \le \frac{4(V_{T,T,q}^{(2)})^2}{\Delta_i^2}\EE{\left(\frac{V_{T,N_i(T),q}^{(2)}}{V_{T,T,q}^{(2)}}\right)^2 \frac{Z_{N_i(T),q} \hat\sigma_{c,i,q}^2(N_i(T))}{N_i(T)} N_i(T)} + 1.
	}
	Using $\hat\nu_{N_i(T),i,q} = \frac{Z_{N_i(T),q} \hat\sigma_{c,i,q}^2(N_i(T))}{N_i(T)}$ and the Law of Iterated Expectation, we get
	\als{
		\EE{u} & \le \frac{4(V_{T,T,q}^{(2)})^2}{\Delta_i^2}\EE{\left(\frac{V_{T,N_i(T),q}^{(2)}}{V_{T,T,q}^{(2)}}\right)^2\hat\nu_{N_i(T),i,q}  N_i(T)} + 1 &\hspace{-9mm}  \nonumber \\
		& = \frac{4(V_{T,T,q}^{(2)})^2}{\Delta_i^2}\EE{\EE{\left(\frac{V_{T,N_i(T),q}^{(2)}}{V_{T,T,q}^{(2)}}\right)^2\hat\nu_{N_i(T),i,q}  N_i(T)| N_i(T)}} + 1 \\
		\implies \EE{u} & = \frac{4(V_{T,T,q}^{(2)})^2}{\Delta_i^2}\EE{\left(\frac{V_{T,N_i(T),q}^{(2)}}{V_{T,T,q}^{(2)}}\right)^2N_i(T)\EE{\hat\nu_{N_i(T),i,q} | N_i(T)}} + 1.
	}
	
	We know that  $\hat\nu_{N_i(T),i,q}$ given $N_i(T)$ is an unbiased estimator (\cref{lem:varEstMulti}) of variance of reward estimator that used $N_i(T)$ observations. Further, the reward observations are IID hence Var$(\hat\mu_{s,i}) = \sigma_i^2/s$ where $\sigma_i^2= \text{Var}(X_{s,i})$ . With this, using \cref{lem:estProp} for getting $\EE{\hat\nu_{N_i(T),i,q}| N_i(T)}$, we have 
	\als{
		\EE{u} &  = \frac{4(V_{T,T,q}^{(2)})^2}{\Delta_i^2}\EE{\left(\frac{V_{T,N_i(T),q}^{(2)}}{V_{T,T,q}^{(2)}}\right)^2N_i(T)\frac{N_i(T) - 2}{N_i(T)-q-2} (1-\rho_i^2)\frac{\sigma^2}{N_i(T)}} + 1  \\
		& = \frac{4(V_{T,T,q}^{(2)})^2 (1-\rho_i^2)\sigma_i^2}{\Delta_i^2} \EE{\frac{N_i(T) - 2}{N_i(T)-q-2} \left(\frac{V_{T,N_i(T),q}^{(2)}}{V_{T,T,q}^{(2)}}\right)^2} + 1.
	}
	Since $C_{T,i,q} =  \EE{\frac{N_i(T) - 2}{N_i(T)-q-2} \left(\frac{V_{T,N_i(T),q}^{(2)}}{V_{T,T,q}^{(2)}}\right)^2}$, we have
	\al{
		\implies \EE{u} & \le \frac{4(V_{T,T,q}^{(2)})^2 C_{T,i,q}(1-\rho_i^2)\sigma_i^2}{\Delta_i^2} + 1. \label{equ:expU1bound}
	}
	Using \cref{equ:expE1bound},  \cref{equ:expE2bound}, and \cref{equ:expU1bound},  in \cref{equ:expectedPull}, we get
	\eqs{
		\EE{N_i(T)} \le \frac{4(V_{T,T,q}^{(2)})^2 C_{T,i,q}(1-\rho_i^2)\sigma_i^2}{\Delta_i^2} + \frac{\pi^2}{3} + 1. \qedhere
	}
\end{proof}

Now we are ready to upper bound the regret of \ref{alg:UCB-CV}.

\regretBound*
\begin{proof}
	By definition, we have
	\als{
		{\Regret_T} &= T\mu_{i^\star} - \EE{\sum_{t=1}^{T} X_{t,I_t}} = \sum_{i=1}^K \Delta_i \EE{N_i(T)}. 
	}
	
	Using \cref{lem:expectedPulls} to replace $\EE{N_i(T)}$, we get
	\eqs{
		{\Regret_T} \le \sum_{i \in [K]\setminus \{i^\star\}} \left( \frac{4(V_{T,T,q}^{(2)})^2 C_{T,i,q}(1-\rho_i^2)\sigma_i^2}{\Delta_i} + \frac{\Delta_i\pi^2}{3} + \Delta_i\right). \qedhere
	}
\end{proof}

\section{Batching}
\label{asec:batching}
The batching is well known method used for calculating confidence intervals for means of a sequence of correlated observations \citep{MS63_conway1963some,OR82_schmeiser1982batch}. Batching transforms correlated observations into smaller uncorrelated and (almost) normally distributed observations called batch means. Let $\bar{X}_{1,i}, \bar{X}_{2,i}, \ldots, \bar{X}_{s,i}$ be the correlated observations from arm $i \in [K]$. These can be transformed into $b$ batch means, where each batch mean uses $m$ correlated observations. The value of $b=\floor{s/m}$. The $j^{\text{th}}$ mean batch is given by
\eqs{
	Y_{j,i,q}^{\text{B}} = \frac{1}{m}\sum_{i=(j-1)m + 1}^{jm} \bar{X}_{s,i,q}, ~~~ j \in \{1, 2, \ldots, b\}.
}

Let the mean reward estimator be $\hat\mu_{s,i,q}^{c, \text{B}} = \frac{1}{\floor{s/m}} \sum_{j=1}^{\floor{s/m}}Y_{j,i,q}^{\text{B}}$ and $\hat\nu_{N_i(t),i,q}^{\text{B}}$ be the sample variance of mean reward estimator for arm $i$. Under assumption that batch means are normally distributed, we can define the optimistic upper bound for mean reward estimate as follows: 
\eqs{
	\text{UCB}_{t,i,q}^{\text{B}} = \hat\mu_{M_i(t),i,q}^{c, \text{B}} + V_{t,M_i(t),q}^{\text{B},\alpha}\sqrt{\hat\nu_{N_i(t),i,q}^{\text{B}}},
}
where $M_i(t)$ is the number of batch means until round $t$, $V_{t,s,q}^{\text{B},\alpha}$ is the $100(1-1/t^\alpha)^{\text{th}}$ percentile value of the $t-$distribution with $\floor{s/m} - q-1$ degrees of freedom, and $\hat\nu_{s,i,q}^{\text{G}}$ is computed using \cref{lem:varEstMulti} with $\floor{s/m}$ observations. We can use \ref{alg:UCB-CV} with $Q=mq-q +1$ and replacing $\text{UCB}_{t,i}$ by $\text{UCB}_{t,i,q}^{\text{B}}$ for solving the MAB-CV problem instance using batching method. 

The cost for batching is the loss of degrees of freedom. \citet{EJOR89_nelson1989batch} quantifies this cost in terms of the variance estimator when $\{\bar{X}_i\}$ is supposed to be having a multivariate normal distribution. Unfortunately, this assumption only holds for a minimum batch size $(m^\star)$ that makes batch means normally distributed uncorrelated observations. The value of $m^\star$ depends on the observations and there exists no closed-form expression for it.

We refer the readers to \cite{EJOR89_nelson1989batch} to know how to select the batch size in batching method. More detailed discussion about batching, jackknifing, and splitting can be found in \cite{OR90_nelson1990control}.

~\\
\hrule

	\fi
	
\end{document}